%% file: IEEE_TCC_1st.tex
\documentclass[A4paper,twoside,journall]{IEEEtran}

\include{Definition}

\begin{document}

\title{\huge Multi-Timescale Latent-Action DRL for Joint Optimization in Edge-Cloud Networks}
\author{
\IEEEauthorblockN{Vo Phi Son, Van-Dinh Nguyen, Ngoc Hung Nguyen, Trinh Van Chien and  Symeon Chatzinotas\\}
\thanks{V. P. Son is with Smart Green Transformation Center and College of Engineering and Computer Science, VinUniversity, Vietnam (e-mail: son.vp@vinuni.edu.vn). V.-D. Nguyen (Corresponding author) is with Trinity College Dublin,  Ireland  (e-mail: dinh.nguyen@tcd.ie). N. H. Nguyen is with Phenikaa University, Vietnam (e-mail: hung.nguyenngoc@phenikaa-uni.edu.vn). T. V. Chien is with Hanoi University of Science and Technology,  Vietnam. (e-mail: chientv@soict.hust.edu.vn). S. Chatzinotas is with SnT, University of Luxembourg, Luxembourg (e-mail: symeon.chatzinotas@uni.lu).}
\vspace{-18pt}}

\markboth{Submitted to IEEE Transactions on Cloud Computing}%
{Vo \MakeLowercase{\textit{et al.}}: Multi-Timescale Latent-Action DRL for Joint Optimization in Edge-Cloud Networks}
\maketitle 

\begin{abstract} Load imbalance across edge and cloud layers degrades latency performance in hierarchical edge–cloud computing (HECC) systems under dynamic task arrivals and heterogeneous resources, leading to severe queuing delays and inefficient resource utilization. To address this challenge, we study a joint service placement, computational delegation, and power control (JSCP) problem to minimize the average end-to-end (e2e) latency. The resulting JSCP problem is a mixed-integer nonconvex and NP-hard optimization problem due to the strong coupling between discrete and continuous variables. To enable tractable optimization and stable system adaptation, we exploit the inherent difference in decision dynamics and decompose the problem into long-term system configuration and short-term resource allocation subproblems. Based on this formulation, we propose a two-timescale multi-layer deep reinforcement learning framework with a latent action space (2T-MDRL-LA) to jointly optimize service placement, user association, computational delegation, task offloading, and user transmit power. A latent action representation based on a variational autoencoder is introduced to efficiently compress the high-dimensional combinatorial action space. Simulation results demonstrate that the proposed framework effectively adapts to dynamic network conditions and achieves near-optimal performance compared to branch-and-bound solutions. It achieves up to a $20.8\%$ reduction in average e2e latency and a $13\%$ improvement in resource utilization over the scheme without the computational delegation, while converging approximately $50\%$ faster than conventional proximal policy optimization. 

\end{abstract}

\begin{IEEEkeywords} 
Computational delegation, hierarchical edge-cloud computing, load imbalance, deep reinforcement learning, service placement.
\end{IEEEkeywords}

\section{Introduction}\label{sec_Intro}
Recent advances in hierarchical edge–cloud computing (HECC) for Internet of Things (IoT) have enabled a promising paradigm for latency-sensitive applications by leveraging cooperation between edge servers (ESs) and the cloud server (CS). In such systems, micro-tasks from nearby users (UEs) are processed at ESs, while computationally intensive or delay-tolerant tasks are handled at the CS, thereby reducing e2e latency~\cite{Sun_2024, Su_2023}. For example, real-time sensor data are processed at ESs to meet strict latency requirements, whereas applications such as disaster forecasting or image recognition are executed at the CS due to their high computational demands~\cite{Sun_2024, Sun_2024_iot}.

Despite these advantages, several challenges remain. Due to limited resources, ESs cannot host all services, requiring joint optimization of user association and service placement under dynamic conditions. Moreover, random task arrivals in time and space~\cite{Yang_2023,Lima_2022} lead to uneven workloads across ESs, causing queueing delays, resource underutilization, and degraded latency performance~\cite{Yang_2023}. In addition, frequent data transmission among UEs, ESs, and the CS introduces further delay bottlenecks~\cite{huynh_service_placement}. These challenges call for a unified framework that jointly optimizes user association, service placement, computational delegation, and task offloading.

Existing studies have addressed parts of this problem. Computational delegation and task offloading have been explored for UE–ES and ES–CS cooperation~\cite{huynh_service_placement,Wang_2020_Fed,Ma_2024}, but often neglect service placement and assume single-service access. Other works jointly optimize service placement and task offloading~\cite{Zhou_2023,Chu_2024,Shang_2024,Yao_2022}, but typically rely on fixed service configurations and overlook computational delegation across ESs and between ES and CS layers. As a result, these approaches may not effectively handle dynamic and large-scale IoT environments.

Furthermore, jointly optimizing user association, service placement, computational delegation, and task offloading leads to an NP-hard mixed-integer nonconvex problem due to the strong coupling among decision variables. Conventional methods become intractable in large-scale settings. Although deep reinforcement learning (DRL) has emerged as a promising approach~\cite{Zhou_2023_energy,Lin_2023,Fan_2024,Yan_2024}, existing DRL methods still struggle to handle coupled decisions and the exponentially large action space as network size increases.

\subsection{Related Work}\label{sec_related_work}
Most existing works focus on joint optimization of service placement and task offloading or task migration in edge/cloud computing networks. Du \textit{et al.}~\cite{Du_2024} proposed a framework that jointly learns service placement and task offloading to minimize task completion latency under user mobility in UAV-assisted MEC systems. Similarly, the authors in \cite{Chu_2024} introduced a two-timescale framework combining long-term service caching with short-term task offloading. Fan \textit{et al.}~\cite{Fan_2024_TMC} developed a collaborative approach for service placement and task scheduling across neighboring edge nodes, while a joint service caching, offloading, and migration in UAV-assisted networks was studied in \cite{Zhao_2025_TMC}. However, these works generally assume relatively stable traffic patterns and do not explicitly address load imbalance caused by highly dynamic IoT workloads. Moreover, the ES cooperation combined with service placement, user association, and computational delegation remains an open problem.

To explicitly capture load imbalance, queuing-theoretic models have been widely adopted. For instance, Yang \textit{et al.}~\cite{Yang_2023} modeled fixed-service queues to enable task migration between heavily and lightly loaded ESs. Chen \textit{et al.}~\cite{Chen_2022} and Hu \textit{et al.}~\cite{Hu_2025} proposed queuing-based models to estimate service delays under different orchestration strategies. More recently, the multi-instance M/M/c queuing model was considered in \cite{Wang_2025}  for the joint service deployment and request routing. However, these approaches typically rely on steady-state assumptions and fixed service configurations, which may not hold under highly dynamic and non-uniform IoT traffic conditions.

In recent years, DRL has emerged as a powerful tool for edge/cloud optimization due to its ability to handle high-dimensional and dynamic decision-making problems. Existing studies have explored various action space designs to address the complexity of joint optimization. For example, Zhou \textit{et al.}~\cite{Zhou_2023_energy} adopted a combinatorial action space for joint offloading and service caching, while Lin \textit{et al.}~\cite{Lin_2023} proposed reduced action representations to improve scalability in vehicular edge computing. In \cite{Fan_2024}, a hybrid action space integrating discrete and continuous variables was developed to minimize system cost by accounting for both time and energy consumption. Other approaches \cite{hubert2021learning, majeed2021exact, zhu2021overview} employ decomposition or encoder–decoder architectures to mitigate the impact of large action spaces.
Despite these efforts, existing DRL-based solutions often fail to fully capture the strong coupling among decision variables in joint optimization problems. As a result, the action space remains large and complex, leading to slow convergence and limited scalability in large-scale edge–cloud systems.

\begin{table*}[t]
\centering
\caption{Comparison of Key Features Between Our Work and Existing Approaches}
\label{tab:comparison}
\scriptsize
\begin{tabular}{c|c|c|c|c|c|c|c}
\hline
\textbf{Schemes} 
& \textbf{ES/ES}  
& \textbf{ES/CS}
& \textbf{User} 
& \textbf{Service} 
& \textbf{Task offloading,} 
& \textbf{Multi-layer DRL} 
& \textbf{Task Queuing} \\
& \textbf{cooperation}
& \textbf{cooperation}
& \textbf{Association}
& \textbf{Placement}
& \textbf{UE power Allocation}
& \textbf{with Latent Action Space}
& \textbf{Stability} \\
\hline
{\cite{Yang_2023}}  & $\checkmark$ & $\times$ & $\times$ & $\times$ & $\times$ & $\times$ & $\checkmark$  \\
{\cite{Chu_2024}} & $\checkmark$  & $\times$ & $\checkmark$ & $\checkmark$ & $\checkmark$ & $\times$ & $\times$   \\
{\cite{Zhou_2023_energy}} & $\checkmark$  & $\checkmark$ & $\times$ & $\checkmark$ & $\checkmark$ & $\times$ & $\times$   \\
{\cite{Fan_2024}} & $\checkmark$  & $\times$ & $\checkmark$ & $\checkmark$ & $\checkmark$ & $\times$ & $\times$  \\
{\cite{Fan_2024_TMC}} & $\checkmark$  & $\checkmark$ & $\times$ & $\checkmark$ & $\checkmark$ & $\times$ & $\checkmark$  \\
{\cite{Wang_2025}}& $\checkmark$  & $\times$ & $\times$ & $\checkmark$ & $\times$ & $\times$ & $\checkmark$  \\
\textbf{Our work}
      & $\checkmark$  & $\checkmark$ & $\checkmark$ & $\checkmark$ & $\checkmark$ & $\checkmark$ & $\checkmark$\\
\hline
\end{tabular}
\end{table*}

 \subsection{Motivation and Main Contributions}\label{sec_motivation_and_contributions}
Although HECC can significantly reduce latency in IoT networks, its performance is limited by several key challenges in dynamic and large-scale environments. First, dynamic task arrivals and heterogeneous resources lead to load imbalance across ESs and between ES and CS layers, resulting in severe queuing delays. While queuing-based methods~\cite{Yang_2023, Wang_2025} and collaborative frameworks~\cite{Fan_2024_TMC} partially address this issue, they often rely on fixed configurations or fail to fully capture dynamic multi-layer interactions. Second, existing approaches that optimize service placement and task offloading without computational delegation may create processing bottlenecks in multi-layer systems~\cite{huynh_service_placement,Sun_2024}, highlighting the need for joint optimization across system layers. Third, although DRL has been applied to such problems, the strong coupling among decision variables leads to extremely large action spaces, limiting scalability and convergence performance in large-scale networks~\cite{Zhou_2023_energy,Lin_2023,Fan_2024,Yan_2024}. Overall, these challenges, including load imbalance, incomplete multi-layer coordination, and DRL scalability limitations, remain insufficiently addressed, motivating the need for a scalable and adaptive optimization framework. Table~\ref{tab:comparison} summarizes key differences between our work and prior studies.

In this work, we consider an HECC system for large-scale IoT networks and propose a unified framework for jointly optimizing service placement, user association, computational delegation, task offloading, and user power allocation. Unlike existing approaches that rely on fixed service placement~\cite{Zhou_2023,Chu_2024,Shang_2024,Yao_2022}, the proposed framework enables dynamic adaptation to load imbalance, heterogeneous resources, and time-varying network conditions. In addition, proximal policy optimization (PPO) enables stable policy updates and naturally supports continuous control, making it suitable for short-term resource allocation. The main contributions are summarized as follows:
\begin{itemize}
\item We develop a computational delegation model for HECC-enabled IoT networks that jointly captures service placement, user association, ES–ES and ES–CS cooperation. The proposed formulation enables coordinated resource management across user, edge, and cloud layers and explicitly addresses load imbalance under resource, cost, and quality-of-service (QoS) constraints. We then formulate the joint service placement, computational delegation, and power control (JSCP) problem as an NP-hard mixed-integer nonconvex optimization problem.

\item To address the intractability of the JSCP problem, we exploit the inherent difference in decision dynamics and decompose it into long-term system configuration and short-term resource allocation subproblems. For the short-term subproblem, we employ PPO due to its stability and effectiveness in continuous control. For the long-term subproblem with a large combinatorial discrete action space, we integrate PPO with a latent action representation via a variational autoencoder (VAE), combining with the mapping table, enabling efficient exploration and scalable learning in large-scale networks. This results in a two-timescale multi-layer DRL framework (2T-MDRL-LA) that jointly handles discrete and continuous decisions.

\item Extensive simulations demonstrate that the proposed approach consistently outperforms benchmark schemes by reducing average e2e latency, accelerating convergence, and improving task offloading efficiency and resource utilization, while achieving near-optimal performance compared to branch-and-bound solutions.
In particular, it achieves up to a $20.8\%$ reduction in average latency and a $13\%$ improvement in resource utilization compared to the w/o computational delegation optimization (CDO) scheme, and converges approximately $50\%$ faster than conventional PPO.
\end{itemize}

\subsection{Paper Structure and Notations} \label{sec_paper_structure_notations}
The remainder of this paper is organized as follows. Section~\ref{sec_SystemModel} presents the system model, while Section~\ref{sec_Problem} formulates the problem and outlines the associated challenges. Section~\ref{sec_Solutions} introduces the proposed two-timescale multi-layer DRL framework. Section~\ref{sec_NumericalResults} reports the numerical results, and Section~\ref{sec_conclusion} concludes the paper.

\textbf{Notation:} Scalars are denoted by lower-case letters, vectors by boldface lower-case letters, and matrices by boldface upper-case letters. The absolute value of a complex scalar and the Euclidean norm of a vector are denoted by $\abs{\cdot}$ and $|\cdot|$, respectively. Moreover, $\mathcal{CN}(\mu,\sigma^2)$ denotes a circularly symmetric complex Gaussian distribution with mean $\mu$ and variance $\sigma^2$.

\section{System Model}\label{sec_SystemModel}
\begin{figure}[t]
	\centering
	\includegraphics[width=0.92\columnwidth,trim={0cm 0.0cm 0cm 0.0cm}]{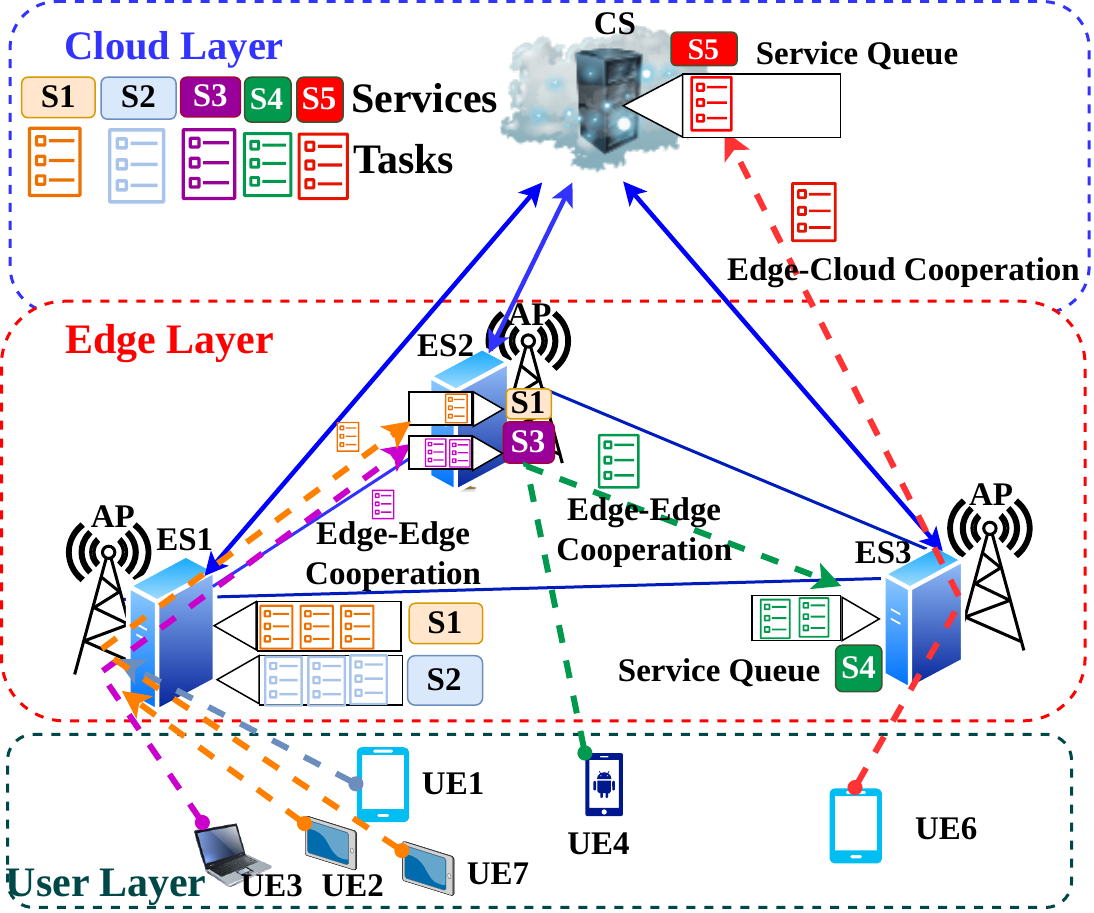}
	\caption{An HECC-enabled IoT network illustrating user, edge, and cloud layers.}
	\label{fig:system_model}
\end{figure}

The considered HECC-enabled IoT network consists of user, edge, and cloud layers, as illustrated in Fig.~\ref{fig:system_model}. Let $\mathcal{M} = \{1,2,\ldots,M\}$ denote the set of UEs, which are randomly distributed over the network coverage area. The edge layer comprises $K$ ESs, represented by $\mathcal{K} = \{1,2,\ldots,K\}$. Each access point (AP), co-located with an ES, is equipped with $L$ antennas to communicate with single-antenna UEs via wireless links. The ESs are interconnected with each other and the cloud server through fronthaul and backhaul links, respectively. The set of services is denoted by $\mathcal{N} = \{1,2,\ldots,N\}$.

The system operates over two timescales, as shown in Fig.~\ref{fig:twotime}: a long-term time-slot $t \in \mathcal{T} = \{1,2,\ldots,T\}$ and a short-term time-slot $t_s \in \mathcal{S} = \{t_1,t_2,\ldots,t_S\}$ within each long-term slot, where $T$ and $S$ denote the numbers of long-term and short-term slots, respectively. The duration of each short-term slot is $\Delta = \ell / S$, where $\ell$ is the duration of a long-term slot. At each long-term slot, service placement, user association, and computational delegation are optimized to maintain system stability and minimize average e2e latency. At each short-term slot, task offloading and user transmit power are optimized to adapt to time-varying wireless channel conditions.

\begin{figure}[t]
	\centering
	\includegraphics[width=0.95\columnwidth,trim={0cm 0.0cm 0cm 0.0cm}]{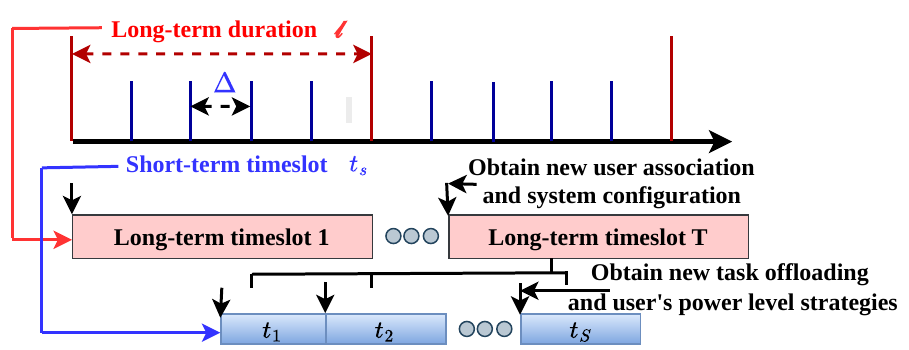}
	\caption{\small Two-timescale operation with long- and short-term decisions.}
	\label{fig:twotime}
\end{figure}

The working of the three layers in this system can be illustrated as follows:

\noindent\textbf{1) User layer:} Each UE $m \in \mathcal{M}$ requests at most one service during each long-term time slot. At each short-term time-slot $t_s$, UE $m$ generates a computation task $J_m[t_s] \triangleq (\tau_m^{\max}[t_s], D_m[t_s], C_m[t_s])$ associated with a service $n \in \mathcal{N}$. Task arrivals follow a Poisson process with mean rate $\lambda_m^{\mathtt{ue}}$ (tasks/s). Each task can either be executed locally with processing rate $f_m$ (cycles/s) or offloaded to the edge layer. Here, $\tau_m^{\max}[t_s]$, $D_m[t_s]$, and $C_m[t_s]$ denote the task deadline, task size, and required CPU cycles, respectively. We define $b_m^k[t] \in \{0,1\}$ to indicate whether UE $m$ is associated with ES $k$ at long-term slot $t$. Specifically, $b_m^k[t]=1$ if UE $m$ connects to ES $k$ for task offloading, and $0$ otherwise. Each UE can be associated with at most one ES, i.e., $\sum_{k \in \mathcal{K}} b_m^k[t] \le 1$. The offloading decision at short-term slot $t_s$ is denoted by $\beta_m[t_s] \in \{0,1\}$, where $\beta_m[t_s]=1$ indicates offloading and $\beta_m[t_s]=0$ indicates local execution.

\noindent\textbf{2) Edge layer:} Let $b_n^k[t] \in \{0,1\}$ indicate whether service $n \in \mathcal{N}$ is installed on ES $k$ at time-slot $t$, where $b_n^k[t]=1$ if service $n$ is deployed on ES $k$, and $0$ otherwise. Due to resource constraints, each ES $k$ must host at least one service and can support at most $N_k^{\max}$ services, \textit{i.e.}, $1 \le \sum_{n \in \mathcal{N}} b_n^k[t] \le N_k^{\max}, \forall k \in \mathcal{K}$.
We define $b_m^{k,k'}[t] \in \{0,1\}$ to indicate whether ES $k$ processes or delegates the task of UE $m$ to ES $k'$ for service execution at time-slot $t$. Specifically, $b_m^{k,k}[t]=1$ indicates local execution at ES $k$, while $b_m^{k,k'}[t]=1$ with $k' \neq k$ indicates that the task is offloaded to another ES $k'$; otherwise, $b_m^{k,k'}[t]=0$.

If edge resources are insufficient, ES $k$ may forward the admitted task to the CS. Accordingly, we define the cloud-forwarding decision variable $b_m^{k,\cs}[t] \in \{0,1\}$, where $b_m^{k,\cs}[t]=1$ indicates that the task of UE $m$ is forwarded from ES $k$ to the cloud, and $0$ otherwise. Each admitted task is executed either locally, offloaded to another ES, or forwarded to the cloud, yielding the constraint: $\sum_{k' \in \mathcal{K}} b_m^{k,k'}[t] + b_m^{k,\cs}[t] \le 1$.

\noindent\textbf{3) Cloud layer:} The cloud server hosts all services with a maximum processing capacity $f_{\max}^{\cs}$. Tasks offloaded to the cloud are processed with rate $f^{\cs} = f_{\max}^{\cs}/N$ (cycles/s).

\subsection{Multi-layer Transmission Latency Model}\label{sec_transmission_model}
\textbf{Wireless transmission latency:} We consider a frequency-division multiple access (FDMA)-based uplink transmission, where equal bandwidth $w$ is allocated to each UE, \textit{i.e.} $w = W/M$, with $W$ (MHz) denoting the total system bandwidth. The channel vector between UE $m$ and AP $k$ at short-term time-slot $t_s$ is modeled as $\boldsymbol{h}_m^k[t_s] = \sqrt{g_m^k[t]},\bar{\boldsymbol{h}}_m^k[t_s]$, where $g_m^k[t]$ represents the large-scale channel gain which remains constant during a long-term slot, and $\bar{\boldsymbol{h}}_m^k[t_s] \sim \mathcal{CN}(0, \mathbf{I}_L)$ denotes small-scale fading.

The signal-to-noise ratio (SNR) of UE $m$ received at AP $k$ is given by
\begin{align}\label{eq:SNR}
 \gamma _m^k[t_s] = \frac{{{p_m[t_s]}{{\big\| {\boldsymbol{h}_m^k\left[ {{t_{s}}} \right]} \big\|}^2}}}{{w{N_0}}}
 \end{align}
where $p_m[t_s]$ and $N_0$ denote the transmit power of UE $m$ and the noise power density, respectively. Accordingly, the uplink data rate of UE $m$ at AP $k$ is expressed as
\begin{align}\label{eq:Rm}
   R_m[t_s] = \sum\limits_{k \in \mathcal{K}} b_{m}^k[t] \frac{w}{{\ln 2}}\ln \left( 1 + \gamma_m^k[t_s]\right).  
\end{align}
Given a task of size $D_m[t_s]$, the uplink transmission latency for offloading it from UE $m$ to ES $k$ is  given by
\begin{align}\label{eq:T_mk_radio}
\tau_{m}^k[t_s] = \frac{\beta_{m}[t_s]D_{m}[t_s]}{{ R_m[t_s]}}.
\end{align}
The downlink transmission delay can be neglected in this work due to the relatively small size of computation results and the high transmit power at APs~\cite{latency_2020}.

\textbf{Fronthaul transmission latency: } When the required service is unavailable at ESs or edge resources are insufficient, tasks are forwarded from ES $k$ to $k'$. The worst-case fronthaul transmission latency is modeled as
\begin{align}\label{eq:T_kk_ts}
   \tau _{m}^{k,k'}\left[ {{t_s}} \right] = \frac{\sum\limits_{m \in \mathcal{M}}  {\! \! b_m^k\left[ t \right] \! \! \! \! \sum\limits_{k' \in \mathcal{K} \setminus \{k\}}{ \! \! \! \! b_m^{k,k'}\left[ t \right]b_n^{k'}[t]{\beta _m}\left[ {{t_s}} \right]{D_m}\left[ {{t_s}} \right]}}}{R_k^{k'}} 
\end{align}
where $R_k^{k'}$ denotes the fronthaul transmission rate between ES $k$ and ES $k'$.

\textbf{Backhaul transmission latency:}  
When service $n \in \mathcal{N}$ is not deployed at any ES or the ESs are fully occupied, tasks generated by users $m \in \mathcal{M}$ are forwarded from ES $k$ to the CS for processing. To characterize the worst-case backhaul bottleneck caused by simultaneous task forwarding, the transmission delay from ES $k$ to the CS is given by
\begin{align}\label{eq:T_kc_ts_bottleneck}
   \tau _{m}^{k,\cs}[{t_s}] = \frac{\sum\limits_{m \in \mathcal{M}} {b_m^{k,\cs}\left[ t \right]b_m^k\left[ t \right]{\beta _m}\left[ {{t_s}} \right]{D_m}\left[ {{t_s}} \right]}}{R_k^{\cs}}
\end{align}
where $R_k^{\cs}$ denotes the backhaul transmission rate.

Combining \eqref{eq:T_mk_radio}-\eqref{eq:T_kc_ts_bottleneck}, the total transmission latency of an offloaded task from UE $m$ is
\begin{align}\label{T_tot_ts}
  &\tau_{m}^{\mathtt{ts}}[t_s] = \tau_{m}^k[t_s] + \max_{\forall k} \bigl\{ \tau_{m}^{k,\mathtt{cs}}[t_s] 
  + \tau_{m}^{k,k'}[t_s] \bigr\}.
\end{align}

\subsection{Processing Latency Model}\label{sec_computation_model}
\textbf{The local processing}: The remaining portion of the task processed locally at UE $m$ is ${(1-\beta _{m}}\left[ {{t_s}} \right]){C_{m}}\left[ {{t_s}} \right]$. The local processing time at UE $m$ with a computation rate $f_m$ is given by
\begin{align}\label{eq:T_ue_cp}
    \tau_{m}^{\mathtt{ue}}[{t_s}] = \frac{{{(1-\beta _{m}}\left[ {{t_s}} \right]){C_{m}}\left[ {{t_s}} \right]}}{{{f_m}}}.
\end{align}

\textbf{Edge processing latency:} We assume that each service $n$ maintains a dedicated computation queue at ES $k$. The queue of service $n$ at ES $k$ at time $t_s$ is denoted by $\boldsymbol{q}_{n}^{k}(t_s) = \{q_{n, 1}^{k}(t_s), q_{n, 2}^{k}(t_s), \ldots, q_{n, I_{n}^k[t_s]}^{k}(t_s)\}$, where $I_n^k[t_s]$ represents the number of tasks in the queue at time $t_s$.

Each queue is allocated a dynamically adjusted computation capacity to improve resource utilization at the ES, given by $f_n^k[t] = \frac{f_{\max}^k}{\sum_{n \in \mathcal{N}} b_n^k[t]}$, where $f_{\max}^k$ denotes the maximum computational capacity of ES $k$. Let $i_m[t_s]$ denote the position of task $J_m[t_s]$ in the queue. The queuing delay of task $J_m[t_s]$ at ES $k$ is given by
$\tau_{m}^{\mathtt{es,q}}[t_s] = \sum_{i = 1}^{i_m[t_s] - 1} \frac{\mathcal{C}\big(q_{n,i}^k(t_s)\big)}{f_n^k[t]},$
where $\mathcal{C}\big(q_{n,i}^k(t_s)\big)$ denotes the required number of computational cycles of the task at position $i$ in the queue. The processing time of task $J_{m}[t_s]$ at ES $k$ is then expressed as
\begin{align} \label{eq:T_c_ES}
    \tau_{m}^{k,\cp}[t_s] = \tau_{m}^{\mathtt{es,q}}[t_s] + \frac{\beta_{m}[t_s]C_{m}[t_s]}{f_n^k[t]}.
\end{align}
The queue length of service $n$ at ES $k$ evolves according to $I_{n}^k[t_s+1] = I_{n}^k[t_s] - I_n^{\mathsf{out}}[t_s] + I_n^{\mathsf{in}}[t_s+1]$, where $I_{n}^k[t_s]$, $I_n^{\mathsf{out}}[t_s]$ and $I_n^{\mathsf{in}}[t_s + 1]$ denote the number of queued tasks, completed tasks, and newly arrived tasks at the corresponding time slots, respectively \cite{Deng_2021}.

\textbf{Cloud processing latency:} The queue of service $n$ at  CS at time $t_s$ is denoted by $\boldsymbol{q}_{n}^{\cs}(t_s) = \{q_{n, 1}^{\cs}(t_s), q_{n, 2}^{\cs}(t_s), \cdot \cdot \cdot, q_{n, I_{\mathtt{n}}^{\cs}[t_s]}^{k}(t_s)\}$ where $I_n^{\cs}[t_s]$ denotes the number of tasks in the queue and is updated in the same manner as $I^k_n[t_s]$. The queuing delay of task $J_m[t_s]$ at the CS is given by
$\tau_{m}^{\mathtt{cs,q}}[t_s]=\sum_{i = 1}^{i_m - 1} \mathcal{C}\big(q_{n,i}^{\cs}(t_s)\big) / f^{\cs}$, where $\mathcal{C}\big(q_{n,i}^{\cs}(t_s)\big)$ denotes the required number of computational cycles of the task at position $i$. The processing time of task $J_m[t_s]$ at  CS is given as
\begin{align}\label{eq:T_c_cp}
    \tau_{m}^{\text{cs,cp}}[t_s] = \tau_{m}^{\text{cs,q}}[t_s] + \frac{\beta_{m}[t_s]C_{m}[t_s]}{f^{\cs}}.
\end{align} 

From \eqref{eq:T_ue_cp}-\eqref{eq:T_c_cp}, the total processing time of the task associated with service $n$ from UE $m$ is given by
\begin{align}
\tau_{m}^{\mathtt{cp}}[t_s] &= \tau_{m}^{\mathtt{ue}}[t_s] + \mathop {\max }\limits_{\forall k} \Big \{  b_{m}^k\left[ t \right]b_n^k[t]  \tau_{m}^{k,cp}[t_s] \nonumber \\
&\quad \times \Big(1 -  {\sum_{k' \in \mathcal{K} \setminus \{k\}} {b_{m}^{k,k'}\left[ t \right] - b_{m}^{k,\mathtt{cs}}\left[ t \right]} } \Big) \nonumber \\
&\quad 
+ b_{m}^k[t] \!\!\sum_{k' \in \mathcal{K} \setminus \{k\}} \!\!b_{m}^{k,k'}[t] b_n^{k'}[t] \tau_{m}^{k',cp}[t_s]  \nonumber \\
&\quad + b_{m}^k[t] b_{m}^{k,\mathtt{cs}}[t]\tau_{m}^{\mathtt{cs,cp}}[t_s] \Big\}. 
\label{T_tot_cp}
\end{align}

\subsection{Monetary Cost Model}\label{sec_monetary_cost_model}
\textbf{Service placement Cost:} Installing or uninstalling a service at ESs incurs resource consumption, such as CPU cycles and memory~\cite{Jiaming_2024}. Let $\rho^{\mathtt{i}}$ and $\rho^{\mathtt{u}}$ ($\$/\text{service}$) denote the costs of service installation and uninstallation at ESs, respectively, where $\rho^{\mathtt{u}} \leq \rho^{\mathtt{i}}$. Define $\Omega_n^k[t] = b_n^k[t] - b_n^k[t-1]$ as the change in the deployment status of service $n$ at ES $k$ from time frame $t-1$ to $t$. Accordingly, $\Omega_n^k[t] \in \{-1, 0, 1\}$ represents service uninstallation, no change, and installation, respectively. We assume that all services are initially cached at ESs, and therefore, no latency is incurred during service activation or deactivation.

Let $c_n^k[t]$ denote the service placement cost of service $n$ at ES $k$ during time frame $t$. Then, the cost is given by
\begin{align}\label{eq:deploy_cost}
c_n^k[t]
 = \frac{1}{2} {\left( \Omega_n^k[t] \right)}^2 \left( {\rho ^\mathtt{u}} + {\rho ^\mathtt{i}} \right) - \frac{1}{2}\Omega_n^k[t]\left( {\rho ^\mathtt{u}} - {\rho ^\mathtt{i}} \right).
\end{align}

\textbf{Operation and computational cost:} Operating service $n$ at ES $k$ consumes edge resources such as memory and computational capacity. Let $\rho^{\mathtt{o}}$ ($\$/\text{service}$) denote the operation cost of a service at an ES. Accordingly, the operation cost of service $n$ at ES $k$ is given by $b_n^k[t]\rho^{\mathtt{o}}$. In addition, the cloud server (CS) incurs resource consumption when processing offloaded tasks~\cite{wang2021eihdp_price,Li_twotime_cost,Thinh_physycal_cost,Yang_cost_equation}. Let $\rho^{\mathtt{cs}}$ ($\$/\text{request}$) denote the processing cost per task offloaded from an ES to the CS. The corresponding computational cost is given by $b_{m}^{k,\mathtt{cs}}[t]\rho^{\mathtt{cs}}$. Therefore, the total system cost $c^{\mathtt{tot}}[t]$ at long-term time frame $t$, including service placement cost, service operation cost at ESs, and computational cost at the CS, is expressed as
\begin{align}
{c^{\mathtt{tot}}}[t] =& \sum_{k \in \mathcal{K}} \sum_{n \in \mathcal{N}} \left( {c_n^k[t] + b_n^k[t]{\rho ^\mathtt{o}}} \right)  \nonumber\\
& + \sum_{k \in \mathcal{K}} \sum_{m \in \mathcal{M}} {b_{m}^{k,\mathtt{cs}}[t]} {\rho ^{\mathtt{cs}}}.
\end{align}

\subsection{Energy Consumption Model of Users}\label{sec_energy_consumption_model}

Let $e_{m}^{\mathtt{cp}}[t_s]$ and $e_{m}^{\mathtt{ts}}[t_s]$ denote the energy consumption of UE $m$ for local computation and wireless transmission at time-slot $t_s$, respectively. The total energy consumption is given by~\cite{ling2021}
\begin{align}
 e_{m}[t_s] = &e_{m}^{\mathtt{cp}}[t_s] + e_{m}^{\mathtt{ts}}[t_s] = \frac{{{\mu _m}}}{2}(1-{\beta _{m}}\left[ {{t_s}} \right]){ C_{m}}\left[ {{t_s}} \right]{f_m^2} \nonumber \\
 &+ {p_m[t_s]}\frac{{ {\beta_{m}}\left[ {{t_s}} \right] {D_{m}}\left[ {{t_s}} \right]}}{{{R_m}[t_s]} }
\end{align}
where $\mu_m/2$ (in $\text{W}\cdot\text{s}^3/\text{cycle}^3$) denotes the effective switched-capacitance coefficient of UE $m$.

\section{Problem Design and Analysis}\label{sec_Problem}
\subsection{The Problem Design}\label{sec_Problem_main}
The overall e2e latency of task $J_m[t_s]$ for UE $m$ consists of both transmission latency and computation (processing) latency. Specifically, by combining the transmission latency  in \eqref{T_tot_ts} (\textit{e.g.}, from UE $m$ to APs and between APs and CS) and the processing latency in \eqref{T_tot_cp} (\textit{e.g.}, at the UE, ES, or CS), the total e2e latency is given by:
\begin{align}
    \tau_{m}^{\mathtt{e2e}}[t_s] = \tau_{m}^{\mathtt{ts}} [t_s] +\tau_{m}^{\mathtt{cp}}[t_s].
\end{align}
The objective is to minimize the average e2e latency across all UEs, defined as
\begin{equation}
\begin{aligned}
\xi( {{\boldsymbol{b}[t]},\boldsymbol{\beta}[t_s],\boldsymbol{p}\left[ {{t_{s}}} \right]})= \frac{1}{MS} \sum_{m \in \mathcal{M}} { \sum_{t_s \in \mathcal{S}} {{\tau_{m}^{\mathtt{e2e}}\left[ {{t_s}} \right]} }}
\end{aligned}
\end{equation}
where $\boldsymbol{b}[t] \triangleq \big\{b_n^k[t], b_m^k[t], b_m^{k,k'}[t], b_m^{k,\cs}[t]\big\}_{\forall m,n,k}$, $\boldsymbol{\beta}[t_s]\triangleq\{\beta_{m}[t_s]\}_{\forall m}$, and 
$\boldsymbol{p}\left[ {{t_{s}}} \right]\triangleq \{p_m\left[ t_{s} \right]\}_{\forall m}$.

In this paper, we investigate the joint service placement, computational delegation offloading, and user transmit power control (JSCP) problem.  The JSCP problem is formulated as
\begin{subequations} \label{eq:Main}
	\begin{IEEEeqnarray}{cl}
		 \text{JSCP:}\,&\mathop {\min }\limits_{{\boldsymbol{b}},\boldsymbol{\beta} ,\boldsymbol{p}}\, \xi \left( {{\boldsymbol{b}[t]},\boldsymbol{\beta}[t_s],\boldsymbol{p}\left[ {{t_{s}}} \right]} \right)\label{eq:Maina} \\
		& \st  \ \tau_{m}^{\mathtt{e2e}}\left[ {{t_{s}}} \right] \le \tau_m^{\max },\,\forall m                       \label{eq:Mainb}\\
		&\qquad e_{m}[t_s] \le e_m^{\max },\,\forall m \label{eq:Mainc} \\
        &\qquad  R_m[t_s] \ge {R_{\min }},\,\forall m\label{eq:Maind} \\
        &\qquad c^{\mathtt{tot}}[t] \le c^{\mathtt{max}} \label{eq:Maine} \\
        &\qquad \boldsymbol{b}[t] \in \mathscr{B}[t], \boldsymbol{\beta}[t_s] \in \ \boldsymbol{\psi}[t_s], \boldsymbol{p}[t_s] \in \mathscr{P}[t_s] \qquad\quad\label{eq:Mainf} 
	\end{IEEEeqnarray}
\end{subequations}
where $\mathscr{B}[t], \boldsymbol{\psi}[t_s]$ and $\mathscr{P}[t_s]$ denote the feasible sets, defined as follows:
\begin{subequations}\label{setb}
   \begin{IEEEeqnarray}{cl}
    \mathscr{B}[t] \buildrel \Delta \over = 
    &\Big\{ \boldsymbol{b}[t]\Bigl| \,\boldsymbol{b}[t] \in \{0,1\},\,\label{setb_a} \\
    &\quad\sum_{k \in \mathcal{K}} b_{m}^k[t] \le 1, \forall m, \label{setb_b} \\
    &\quad \sum_{k' \in \mathcal{K} \setminus \{k\}} b_{m}^{k,k'}[t] + b_{m}^{k,\cs}[t] \le 1, \forall k,m,\label{setb_c} \\
    &\quad \sum_{k' \in \mathcal{K} \setminus \{k\}} b_{m}^{k,k'}[t] \le 1, \forall m,k,\qquad \label{setb_d}\\ 
      &\quad 1 \le \sum_{n \in \mathcal{N}} b_n^k[t] \le N_k^{\max}, \forall k\Big\} \label{setb_e}
   \end{IEEEeqnarray} 
\end{subequations}
\begin{align}
   \boldsymbol{\psi}[t_s] \buildrel \Delta \over =  \Big \{\boldsymbol{\beta}[t_s] \Bigl| \, \beta_{m}[t_s] \in \{0,1\}, \forall m \Big\}\label{setphi}
\end{align}
\begin{align}
    \mathscr{P}[t_s] \buildrel \Delta \over = \Big \{ \boldsymbol{p}[t_s] \Bigl| \, 0 \le p_m[t_s] \le P_m^{\mathtt{max}}, \forall m \Big\}.\label{setp}
\end{align}
In problem~\eqref{eq:Main}, constraints~\eqref{eq:Mainb} and~\eqref{eq:Mainc}  enforce the task deadline and UE energy consumption limits, respectively. Constraints~\eqref{eq:Maind} and~\eqref{eq:Maine} enforce the minimum transmission-rate requirement and the system cost budget over each long-term time frame. Constraint~\eqref{eq:Mainf} specifies the feasible sets of the optimization variables, which are detailed in~\eqref{setb}-\eqref{setp}. In particular, constraint~\eqref{setb} captures user association, ES–ES cooperation, ES–CS
cooperation, and service placement, as detailed in Section~\ref{sec_SystemModel}. Constraints~\eqref{setphi} and~\eqref{setp} define the feasible sets for offloading decisions, and the UE transmit power, respectively.

\textit{Challenges of Solving Problem JSCP:} The JSCP problem is NP-hard and computationally intractable using conventional optimization methods. Specifically, the objective \eqref{eq:Maina} and constraints \eqref{eq:Mainb}-\eqref{eq:Maind} are nonconvex, resulting in a mixed-integer nonlinear programming (MINLP) problem. The strong coupling between binary variables $(\boldsymbol{b}[t], \boldsymbol{\beta}[t_s])$ and continuous variables $(\boldsymbol{p}[t_s])$ introduces combinatorial complexity while preventing convex reformulation. Moreover, the presence of multiple binary variables significantly enlarges the solution space, leading to exponential growth in computational complexity. As a result, even advanced optimization techniques, such as the improved branch-and-bound (BnB) algorithm (IBBA)~\cite{Vu_2021}, become computationally prohibitive for solving JSCP directly, since the nonconvexity remains even after relaxing the integer constraints.

\subsection{Tractable Reformulation of \eqref{eq:Main} }\label{subsec_Transformation}
To begin with, we introduce structural lemmas that characterize the relationships among binary decision variables, enabling the decomposition of their strong coupling for a more tractable reformulation.

 \begin{lemma}[Service Deployment Availability] \label{LM_lemma1} If service $n$ is not deployed at ES $k$, any offloaded task associated with service $n$ cannot be processed locally at ES $k$ and must instead be forwarded to another ES $k'$ or to the cloud. This condition is expressed as
\begin{align}\label{lemma1}
    b_{m}^k\left[ t \right] - \Big( {\sum_{k' \in \mathcal{K} \setminus \{k\}} {b_{m}^{k,k'}\left[ t \right] + b_{m}^{k,\mathtt{cs}}\left[ t \right]} } \Big) \le b_n^k\left[ t \right].
\end{align}
\end{lemma}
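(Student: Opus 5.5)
The plan is to verify the linear inequality \eqref{lemma1} by exhaustive case analysis over the binary variables. We use only the feasibility constraints collected in $\mathscr{B}[t]$, in particular \eqref{setb_a} and \eqref{setb_c}. Fix $m$, $k$, $t$ and the service $n$ requested by UE $m$. Define the delegation indicator
\[
d_m^k[t] \triangleq \sum_{k' \in \mathcal{K} \setminus \{k\}} b_{m}^{k,k'}[t] + b_{m}^{k,\mathtt{cs}}[t].
\]
By \eqref{setb_a} and \eqref{setb_c}, $d_m^k[t] \in \{0,1\}$. Hence the left-hand side of \eqref{lemma1}, $b_m^k[t]-d_m^k[t]$, takes values in $\{-1,0,1\}$. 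The right-hand side $b_n^k[t]$ lies in $\{0,1\}$.

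The cases are as follows. If $b_m^k[t]=0$, the left-hand side is at most $0 \le b_n^k[t]$, so \eqref{lemma1} holds trivially; UE $m$ is not served by ES $k$. If $b_m^k[t]=1$ and $d_m^k[t]=1$, the task is forwarded to some $k'\neq k$ or to the CS. The left-hand side is then $0$ and the inequality again holds for either value of $b_n^k[t]$. The only binding case is $b_m^k[t]=1$, $d_m^k[t]=0$. Here the task admitted at ES $k$ is neither delegated nor forwarded, so it must be executed at ES $k$. The left-hand side equals $1$, and \eqref{lemma1} forces $b_n^k[t]=1$.

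Conversely, the contrapositive is what the lemma asserts: if $b_n^k[t]=0$ and $b_m^k[t]=1$, then $d_m^k[t]\ge 1$, i.e., the task must be delegated to another ES or forwarded to the cloud. The inequality therefore encodes exactly the stated requirement, and it excludes no configuration that is physically meaningful.

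The main obstacle is not the algebra but justifying why the binding case $b_m^k[t]=1$, $d_m^k[t]=0$ with $b_n^k[t]=0$ must be ruled out. I would argue from the processing model \eqref{T_tot_cp}. There, the local-execution term at ES $k$ is $b_m^k[t]b_n^k[t]\tau_m^{k,\cp}[t_s](1-d_m^k[t])$. If $b_n^k[t]=0$ and $d_m^k[t]=0$, every term inside the maximum vanishes, and an offloaded task ($\beta_m[t_s]=1$) would receive zero processing latency. Equivalently, no queue of service $n$ exists at ES $k$ because $f_n^k[t]$ is allocated only to deployed services. Such a configuration is physically infeasible, so \eqref{lemma1} is the necessary linear constraint excluding it. This completes the plan.
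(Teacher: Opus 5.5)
Your proof is correct and takes essentially the paper's approach: both are exhaustive case checks over the binary variables. The paper splits on $b_n^k[t]$ and asserts that $b_n^k[t]=0$, $b_m^k[t]=1$ forces the delegation sum to equal $1$, while you split on $(b_m^k[t], d_m^k[t])$ and isolate the single excluded configuration $(b_m^k[t], d_m^k[t], b_n^k[t])=(1,0,0)$. Your derivation of $d_m^k[t]\in\{0,1\}$ from \eqref{setb_c}, and your use of \eqref{T_tot_cp} to show that this excluded configuration would receive zero processing latency, are useful details the paper leaves implicit.
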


\begin{lemma}[Association-Constrained Task Delegation]\label{LM_lemma2} An ES $k$ can delegate an offloaded task from UE $m$ to another ES $k'$ or to CS only if UE $m$ is associated with ES $k$ at time-slot $t$, which are expressed as
\begin{subequations}\label{lemma2}
    \begin{IEEEeqnarray}{rcl}
        \sum_{k' \in \mathcal{K} \setminus \{k\}} {b_{m}^{k,k'}\left[ t \right] + b_{m}^{k,\mathtt{cs}}\left[ t \right]} &\le b_{m}^k\left[ t \right], \forall m,k\label{lemma2.1} \\
         \sum_{k' \in \mathcal{K} \setminus \{k\}} {b_{m}^{k,k'}\left[ t \right]\,}  &\le b_{m}^k\left[ t \right],\forall m,k \label{lemma2.2}\\
          b_{m}^{k,\mathtt{cs}}\left[ t \right] &\le b_{m}^k\left[ t \right],\forall m,k.\label{lemma2.3}
    \end{IEEEeqnarray}
\end{subequations}
\end{lemma}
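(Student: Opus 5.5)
The plan is to treat Lemma~\ref{LM_lemma2} as a statement about feasible binary configurations. I will show that every combination of $(b_m^k[t],\{b_m^{k,k'}[t]\}_{k'\neq k}, b_m^{k,\cs}[t])$ consistent with the operational semantics of Section~\ref{sec_SystemModel} satisfies \eqref{lemma2.1}--\eqref{lemma2.3}. Conversely, I will show that imposing these inequalities excludes exactly the meaningless configurations. The modeling fact to use is that ES $k$ only ever handles (admits) tasks of UE $m$ if $b_m^k[t]=1$. Hence the delegation variables $b_m^{k,k'}[t]$ and $b_m^{k,\cs}[t]$ describe an action taken on an admitted task, and they are only defined relative to that association.

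I will fix $m$ and $k$ and split into two cases according to the binary value of $b_m^k[t]$.

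\emph{Case $b_m^k[t]=0$.} UE $m$ sends nothing to ES $k$, so ES $k$ has no task of UE $m$ to forward. Every delegation variable must therefore be zero, and all three left-hand sides vanish, so \eqref{lemma2.1}--\eqref{lemma2.3} hold with equality at $0$. I will remark that without these constraints the expressions \eqref{eq:T_kk_ts}, \eqref{eq:T_kc_ts_bottleneck} and \eqref{T_tot_cp} are still unaffected, since each delegation term is multiplied by $b_m^k[t]$. The cost $c^{\mathtt{tot}}[t]$, however, contains $b_m^{k,\cs}[t]\rho^{\cs}$ without that factor, so a spurious nonzero value would incorrectly charge cloud cost. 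This is why the implication must be enforced explicitly rather than left implicit.

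\emph{Case $b_m^k[t]=1$.} The right-hand sides equal $1$. Constraint \eqref{setb_c} already gives $\sum_{k'\neq k} b_m^{k,k'}[t]+b_m^{k,\cs}[t]\le 1$, which is \eqref{lemma2.1}. Constraints \eqref{lemma2.2} and \eqref{lemma2.3} then follow because each term on their left is a nonnegative part of the sum in \eqref{lemma2.1}; this uses binarity from \eqref{setb_a}.

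Finally, I will note that \eqref{lemma2.1} alone implies \eqref{lemma2.2} and \eqref{lemma2.3} in general, by dropping nonnegative terms. The latter two are therefore stated as the individual ES--ES and ES--CS versions useful for linearization in the reformulation.

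The main obstacle is not algebraic but definitional. The system model never states outright that delegation variables vanish when $b_m^k[t]=0$, so the proof must argue from the semantics of "admitted task" in the edge-layer description. I would first make this explicit as the modeling convention ("ES $k$ can delegate only tasks it receives"), and then the rest is a two-case check over binary values.
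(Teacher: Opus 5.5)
Your proposal is correct and takes essentially the same route as the paper. The paper encodes the implication as the binary relation $x \le y$ (if $y=0$ then $x=0$; if $y=1$ then $x$ is free) with the association variable $b_m^k[t]$ as the controlling $y$, which is your two-case check, including the reliance on the modeling convention that ES $k$ can only delegate tasks it has admitted; your extra remarks are correct refinements of the paper's one-line argument: \eqref{lemma2.1} coincides with \eqref{setb_c} when $b_m^k[t]=1$, it implies \eqref{lemma2.2}--\eqref{lemma2.3} by nonnegativity, and the unguarded cloud-cost term $b_m^{k,\mathtt{cs}}[t]\rho^{\mathtt{cs}}$ shows why the implication must be imposed explicitly.
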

\begin{lemma}[Service-Constrained Assistant ES Condition]\label{LM_lemma3} If UE $m$ is associated with ES $k$ and offloads a task associated with service $n$, then forwarding the task from ES $k$ to another ES $k'$ is feasible only if service $n$ is deployed at ES $k'$ at time-slot $t$, which is expressed as
    \begin{align}\label{lemma3}
    b_{m}^k[t] b_{m}^{k,k'}\left[ t \right] \le b_n^{k'}\left[ t \right],\, \forall k', m. 
\end{align}
\end{lemma}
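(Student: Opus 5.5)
The approach is an exhaustive case analysis over the binary variables. By Lemma~\ref{LM_lemma2}, specifically \eqref{lemma2.2}, the sum $\sum_{k'\neq k} b_m^{k,k'}[t]$ is either $0$ or $1$, and so is $b_m^k[t]$. The inequality \eqref{lemma3} involves only these binary quantities, so it suffices to verify it for every admissible combination of $b_m^k[t]$, $b_m^{k,k'}[t]$ and $b_n^{k'}[t]$, where $n$ is the service requested by UE $m$. The same argument applies for each $k'\neq k$.

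The case analysis runs in three steps.

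\emph{Case 1: $b_m^k[t]=0$ or $b_m^{k,k'}[t]=0$.} The left-hand side of \eqref{lemma3} is zero. Since $b_n^{k'}[t]\in\{0,1\}$ by \eqref{setb_a}, the inequality holds trivially. This also covers the case $k'=k$ when the product is read with the local-execution convention, because then the statement reduces to Lemma~\ref{LM_lemma1}.

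\emph{Case 2: $b_m^k[t]=b_m^{k,k'}[t]=1$.} UE $m$ is associated with ES $k$, and ES $k$ delegates the task of UE $m$ to ES $k'$. Here I invoke the modelling semantics underlying the processing latency \eqref{T_tot_cp} and the fronthaul latency \eqref{eq:T_kk_ts}. The delegated branch contributes $b_m^k[t]\,b_m^{k,k'}[t]\,b_n^{k'}[t]\,\tau_m^{k',\mathrm{cp}}[t_s]$, and the fronthaul load is likewise weighted by $b_n^{k'}[t]$. Suppose $b_n^{k'}[t]=0$. Then the task would be transmitted to ES $k'$ and would incur fronthaul delay, yet no queue $\boldsymbol{q}_n^{k'}$ exists there. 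The task would therefore never be served: it is counted in neither the edge term nor the cloud term of \eqref{T_tot_cp}, since $b_m^{k,\mathtt{cs}}[t]=0$ by \eqref{setb_c}. Such a configuration yields an undefined or unbounded e2e latency and violates the deadline constraint \eqref{eq:Mainb}. Hence every feasible point must satisfy $b_n^{k'}[t]=1$, which gives $1\le 1$.

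Combining the two cases shows that \eqref{lemma3} holds for all $m$ and $k'$ at every feasible point of JSCP. Conversely, \eqref{lemma3} excludes no configuration that the model deems meaningful, so adding it as an explicit linear-type constraint preserves the feasible set.

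The main obstacle is Case 2. The paper never writes "service must be deployed at the target ES" as an explicit constraint in $\mathscr{B}[t]$; it appears only implicitly through the multiplicative factor $b_n^{k'}[t]$ in \eqref{eq:T_kk_ts} and \eqref{T_tot_cp}. The key step is to argue rigorously that $b_n^{k'}[t]=0$ with $b_m^{k,k'}[t]=1$ means the offloaded workload $\beta_m[t_s]C_m[t_s]$ is lost, and hence that \eqref{eq:Mainb} is infeasible whenever $\beta_m[t_s]=1$. First I would check that $b_m^{k,k'}[t]=1$ forces the other branches in \eqref{T_tot_cp} to vanish: the local-ES factor $\big(1-\sum_{k'\neq k} b_m^{k,k'}[t]-b_m^{k,\mathtt{cs}}[t]\big)$ equals zero, and $b_m^{k,\mathtt{cs}}[t]=0$ by \eqref{setb_c}. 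With those branches gone, no processing location remains for the task.
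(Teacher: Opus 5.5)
\textbf{There is a genuine gap in Case~2, which is the only nontrivial case.} You claim that $b_m^k[t]=b_m^{k,k'}[t]=1$ with $b_n^{k'}[t]=0$ gives an undefined or unbounded latency and therefore violates \eqref{eq:Mainb}. The paper's formulas say the opposite.

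By \eqref{setb_c}, $b_m^{k,\mathtt{cs}}[t]=0$, and by \eqref{setb_b}, $b_m^{k''}[t]=0$ for $k''\neq k$. So every branch of the max in \eqref{T_tot_cp} vanishes:
the local-ES branch carries the factor $1-1-0=0$;
the delegated branch carries $b_n^{k'}[t]=0$;
the cloud branch carries $b_m^{k,\mathtt{cs}}[t]=0$.
Hence $\tau_m^{\mathtt{cp}}[t_s]=\tau_m^{\mathtt{ue}}[t_s]$, which is $0$ when $\beta_m[t_s]=1$. The task's share of the fronthaul load in \eqref{eq:T_kk_ts} is also multiplied by $b_n^{k'}[t]=0$. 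So the ``lost'' task is charged only its uplink delay. That makes \eqref{eq:Mainb} easier to meet, not harder, and if anything the objective rewards such configurations. Moreover, if $\beta_m[t_s]=0$ in every slot of the frame, nothing is offloaded, and no constraint of JSCP touches this configuration at all.

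Your conclusion that every feasible point of JSCP has $b_n^{k'}[t]=1$ is therefore false: \eqref{lemma3} is not implied by \eqref{eq:Mainb}--\eqref{eq:Mainf}. This is exactly why the paper imposes \eqref{lemma1}--\eqref{lemma3} explicitly in L-SP \eqref{eq:long-term}; that would be redundant if your argument worked. Your physical picture (the task is shipped to $k'$ and never served) is the right motivation for the constraint. It cannot become a derivation, though, because the model's multiplicative factors silently assign that task zero cost. Your closing ``conversely'' sentence is an assertion, not an argument.

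\textbf{What the paper does instead.} It treats the lemma as a modelling requirement and only checks that the inequality encodes the verbal rule exactly. For binary $x,y$, the condition $x\le y$ forces $x=0$ when $y=0$ and leaves $x$ free when $y=1$. Thus $x\le y$ is equivalent to the implication $x=1\Rightarrow y=1$.

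Take $x=b_m^k[t]\,b_m^{k,k'}[t]$, which is binary and equals $1$ exactly when UE $m$ is associated with ES $k$ and ES $k$ delegates its task to ES $k'$. Take $y=b_n^{k'}[t]$. Then \eqref{lemma3} says precisely that such delegation is allowed only if service $n$ is deployed at ES $k'$, and it imposes nothing else. Your Case~1 is the $x=0$ half of this check.

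To repair Case~2, drop the appeal to the deadline constraint. The combination $x=1$, $y=0$ is excluded by the lemma's own ``only if'' requirement, and \eqref{lemma3} is the binary encoding of that requirement, not a consequence of the other constraints.
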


\begin{proof}
Consider $x, y \in \{0,1\}$ with $x \leq y$. Then, if $y = 0$, it must hold that $x = 0$; if $y = 1$, $x$ can take either value in $\{0,1\}$.   For Lemma~\ref{LM_lemma1}, consider the service deployment indicator $b_n^k[t]$. If $b_n^k[t] = 0$, service $n$ is not deployed at ES $k$, and thus any task associated with service $n$ cannot be processed locally at ES $k$. In this case, if $b_m^k[t] = 1$ (\textit{i.e.} the task is admitted by ES $k$), the task must be forwarded to another ES $k'$ or to CS, implying that 
 $\sum_{k' \in \mathcal{K} \setminus \{k\}} b_m^{k,k'}[t] + b_m^{k,\mathtt{cs}}[t] = 1$. Hence, the inequality in \eqref{lemma1} holds.

If $b_n^k[t] = 1$, service $n$ is available at ES $k$. In this case, if $b_m^k[t] = 1$, the task can either be processed locally or forwarded. Therefore, 
$\sum_{k' \in \mathcal{K} \setminus \{k\}} b_m^{k,k'}[t] + b_m^{k,\mathtt{cs}}[t] \in \{0,1\}$, 
and the inequality in \eqref{lemma1} is still satisfied. This completes the proof of Lemma~\ref{LM_lemma1}.

Lemmas~\ref{LM_lemma2} and~\ref{LM_lemma3} follow from the same binary control relationship, where the forwarding variables are constrained by the association variable $b_m^k[t]$, ensuring valid task delegation.
\end{proof}

\begin{proposition}\label{pro1}
By applying Lemmas~\ref{LM_lemma1}--\ref{LM_lemma3}, the strong coupling among the binary decision variables in \eqref{eq:T_kk_ts}, \eqref{eq:T_kc_ts_bottleneck}, and \eqref{T_tot_cp} can be effectively decomposed. Consequently, the latency expressions can be reformulated into the following equivalent and simplified forms:
    \begin{subequations} \label{eq_pro1}
        \begin{IEEEeqnarray}{rcl}
             && \tau _{m}^{k,k'}\left[ {{t_s}} \right] = \frac{\sum\limits_{m \in \mathcal{M}}  { \sum\limits_{k' \in \mathcal{K} \setminus \{k\}}{ \! \! \! \! b_m^{k,k'}\left[ t \right]{\beta _m}\left[ {{t_s}} \right]{D_m}\left[ {{t_s}} \right]}}}{R_k^{k'}} \label{eq:T_kk_ts_reform}\\
             &&  \tau _{m}^{k,\cs}[{t_s}] = \sum\limits_{m \in \mathcal{M}} {b_m^{k,\cs}\left[ t \right]{\beta _m}\left[ {{t_s}} \right]{D_m}\left[ {{t_s}} \right]} /R_k^{\cs} \label{eq:T_kc_ts_bottleneck_reform} \\
            &&  \tau_{m}^{\mathtt{cp}}[t_s] = \tau_{m}^{\mathtt{ue}}[t_s] + \mathop {\max }\limits_{\forall k} \Big \{ ( b_{m}^k[t] - \!\!\!\!\!\!\!\sum_{k' \in \mathcal{K} \setminus \{k\}} \!\!\!\!\!\!\! b_{m}^{k,k'}[t] - b_{m}^{k,\mathtt{cs}}[t] )   \nonumber \\ 
            && \times \tau_{m}^{k, \mathtt{cp}}[t_s]  + \!\!\!\!\!\!\!\sum_{k' \in \mathcal{K} \setminus \{k\}} \! \! \!\!\!\!b_{m}^{k,k'}[t] \tau_{m}^{k',\mathtt{cp}}[t_s] + b_{m}^{k,\mathtt{cs}}[t]\tau_{m}^{\mathtt{cs,cp}}[t_s] \Big\}.\quad\,\label{T_tot_cp_reform} 
        \end{IEEEeqnarray}
    \end{subequations}
\end{proposition}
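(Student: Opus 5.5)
The plan is to verify each of the three identities in \eqref{eq_pro1} term by term on the feasible set, using only the binary identity $xy = x$ whenever $x,y\in\{0,1\}$ and $x\le y$. This is exactly the implication structure recorded in Lemmas~\ref{LM_lemma1}--\ref{LM_lemma3}.

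\textbf{Fronthaul term \eqref{eq:T_kk_ts_reform}.} Each product $b_m^k[t]\,b_m^{k,k'}[t]\,b_n^{k'}[t]$ in \eqref{eq:T_kk_ts} is handled in two steps.
\begin{itemize}
\item Lemma~\ref{LM_lemma2}, via \eqref{lemma2.2}, gives $b_m^{k,k'}[t]\le b_m^k[t]$ for every $k'\neq k$, since each summand is bounded by the sum. Hence $b_m^k[t]\,b_m^{k,k'}[t]=b_m^{k,k'}[t]$.
\item Lemma~\ref{LM_lemma3}, via \eqref{lemma3}, then gives $b_m^{k,k'}[t]=b_m^k[t]\,b_m^{k,k'}[t]\le b_n^{k'}[t]$. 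Hence $b_m^{k,k'}[t]\,b_n^{k'}[t]=b_m^{k,k'}[t]$.
\end{itemize}
Substituting these two identities into \eqref{eq:T_kk_ts} gives \eqref{eq:T_kk_ts_reform}.

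\textbf{Backhaul term \eqref{eq:T_kc_ts_bottleneck_reform}.} Constraint \eqref{lemma2.3} gives $b_m^{k,\cs}[t]\le b_m^k[t]$, so $b_m^{k,\cs}[t]\,b_m^k[t]=b_m^{k,\cs}[t]$. Substituting into \eqref{eq:T_kc_ts_bottleneck} gives \eqref{eq:T_kc_ts_bottleneck_reform}.

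\textbf{Processing term \eqref{T_tot_cp_reform}.} Inside the max of \eqref{T_tot_cp} I treat the three summands separately.
\begin{itemize}
\item The third summand $b_m^k[t]\,b_m^{k,\cs}[t]$ reduces to $b_m^{k,\cs}[t]$ by \eqref{lemma2.3}.
\item The second summand $b_m^k[t]\,b_m^{k,k'}[t]\,b_n^{k'}[t]$ reduces to $b_m^{k,k'}[t]$ exactly as in the fronthaul case.
\item The first summand is $b_m^k[t]\,b_n^k[t]\bigl(1-\sum_{k'\neq k}b_m^{k,k'}[t]-b_m^{k,\cs}[t]\bigr)$. Write $s\triangleq\sum_{k'\neq k}b_m^{k,k'}[t]+b_m^{k,\cs}[t]$. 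By \eqref{setb_c}, $s\in\{0,1\}$, and by \eqref{lemma2.1}, $s\le b_m^k[t]$. Therefore
\[
b_m^k[t]\,(1-s)=b_m^k[t]-b_m^k[t]\,s=b_m^k[t]-s ,
\]
and this quantity is itself binary.
\end{itemize}

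It remains to remove the factor $b_n^k[t]$ from the first summand. Lemma~\ref{LM_lemma1}, via \eqref{lemma1}, states precisely $b_m^k[t]-s\le b_n^k[t]$. So $(b_m^k[t]-s)\,b_n^k[t]=b_m^k[t]-s$, which yields the coefficient $\bigl(b_m^k[t]-\sum_{k'\neq k}b_m^{k,k'}[t]-b_m^{k,\cs}[t]\bigr)$ multiplying $\tau_m^{k,\mathtt{cp}}[t_s]$ in \eqref{T_tot_cp_reform}.

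Every replacement above is an exact equality on the feasible set, not a relaxation. The reformulated expressions therefore coincide with the originals for every feasible $\boldsymbol{b}[t]$. This establishes equivalence of the problem once Lemmas~\ref{LM_lemma1}--\ref{LM_lemma3} are added as constraints, which is valid because they describe physically necessary conditions.

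\textbf{Main obstacle.} The delicate step is the first summand of \eqref{T_tot_cp}. The factor $(1-s)$ is only a valid binary quantity because \eqref{setb_c} caps $s$ at one. The expansion $b_m^k(1-s)=b_m^k-s$ in turn needs \eqref{lemma2.1}. Both facts must be checked before Lemma~\ref{LM_lemma1} can be invoked to drop $b_n^k[t]$.
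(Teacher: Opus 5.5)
Your proposal is correct and follows essentially the same route as the paper. The paper's proof rests on the binary absorption rules \eqref{proof_logic_simplify}--\eqref{proof_logic_simplify_2} (i.e., $x_1x_2=x_1$ whenever $x_1\le x_2$), applied with \eqref{lemma2.2} and \eqref{lemma3} for the fronthaul term, \eqref{lemma2.3} for the backhaul term, and \eqref{lemma1} and \eqref{lemma2.1} for the processing term. Your explicit treatment of the first summand of \eqref{T_tot_cp}, checking that $s$ is binary and that $b_m^k[t](1-s)=b_m^k[t]-s$ before invoking Lemma~\ref{LM_lemma1}, spells out the step the paper compresses into one line. Note also that \eqref{lemma2.1} alone already forces $s\in\{0,1\}$, so \eqref{setb_c} is not needed there.
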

\begin{proof}

Considering $x_1 \leq x_2$ and $x_1x_2 \leq x_3$, and following Table \ref{tab:true_table}, we have the output $y = x_1$, and then
\begin{align}\label{proof_logic_simplify}
    \left\{ {\begin{array}{*{20}{c}}
{y = x_1x_2x_3}\\
{x_1 \le x_2}\\
{x_1x_2 \le x_3}
\end{array}} \right. \Leftrightarrow \left\{ {\begin{array}{*{20}{c}}
{y = x_1}\\
{x_1 \le x_2}\\
{x_1x_2 \le x_3.}
\end{array}} \right.
\end{align}
\begin{table}[htb]
\centering  
\captionof{table}{Truth Table}
\label{tab:true_table}
    \begin{tabular}{@{}ccc|c@{}}
        \hline
        $x_1$ & $x_2$ & $x_3$& $y = x_1 x_2 x_3$ \\
        \hline\hline
       0 & 0 & 0 & 0 \\
       0 & 1 & 0 & 0 \\
       0 & 0 & 1 & 0 \\
       0 & 1 & 1 & 0 \\
       1 & 1 & 1 & 1 \\
        \hline                          
    \end{tabular}
\end{table}

Similarly, it follows that
\begin{align}\label{proof_logic_simplify_2}
    \left\{ {\begin{array}{*{20}{c}}
{y = x_1x_2}\\
{x_1 \le x_2}
\end{array}} \right. \Leftrightarrow \left\{ {\begin{array}{*{20}{c}}
{y = x_1}\\
{x_1 \le x_2.}
\end{array}} \right.
\end{align}
These equivalences eliminate redundant binary products. Applying Lemmas~\ref{LM_lemma1}--\ref{LM_lemma3} with \eqref{proof_logic_simplify} and \eqref{proof_logic_simplify_2} yields \eqref{eq_pro1}.

\textit{Derivation of~\eqref{eq:T_kk_ts_reform}}: From \eqref{lemma2.2} and \eqref{lemma3}, the product term can be reduced to $b_m^{k,k'}[t]$. Applying this to \eqref{eq:T_kk_ts}, we obtain
\begin{align}\label{proof_23a}
    \left\{ {\begin{array}{*{20}{c}}
{\!\!y = \!\!\!\sum\limits_{m \in \mathcal{M}}  \! \! b_m^k\left[ t \right] \! \! \! \!\!\!\! \sum\limits_{k' \in \mathcal{K} \setminus \{k\}} \! \! \! \! b_m^{k,k'}[t]b_n^{k'}[t]}\\
{~\eqref{lemma2.2}}, ~\eqref{lemma3}
\end{array}} \right. \!\!\!\!\!\!\!\!\!\Leftrightarrow \!\! \left\{ {\begin{array}{*{20}{c}}
{\!\!\!y = \!\!\!\!\!\sum\limits_{k' \in \mathcal{K} \setminus \{k\}} \! \! \! \!\!\! b_m^{k,k'}\![t]}\\
{~\eqref{lemma2.2}},~\eqref{lemma3}.
\end{array}} \right.
\end{align}

\textit{Derivation of~\eqref{eq:T_kc_ts_bottleneck_reform}}: From \eqref{lemma2.3} and \eqref{proof_logic_simplify_2}, the product $b_m^{k,\mathtt{cs}}[t] , b_m^k[t]$ reduces to $b_m^{k,\mathtt{cs}}[t]$. Substituting into \eqref{eq:T_kc_ts_bottleneck} yields
\begin{align}\label{proof_23b}
    \left\{ {\begin{array}{*{20}{c}}
{y = \!\!\!\sum\limits_{m \in \mathcal{M}}\!\!\! b_m^{k,\cs}\left[ t \right]b_m^k\left[ t \right]}\\
{\eqref{lemma2.3}}
\end{array}} \right. \!\!\!\Leftrightarrow \left\{ {\begin{array}{*{20}{c}}
{y = \!\!\!\sum\limits_{m \in \mathcal{M}}\!\!\! b_m^{k,\cs}\left[ t \right]}\\
{\eqref{lemma2.3}.}
\end{array}} \right.
\end{align}

\textit{Derivation of~\eqref{T_tot_cp_reform}}: Similarly, applying \eqref{proof_logic_simplify_2} together with \eqref{lemma1}, \eqref{lemma2.1}, and \eqref{proof_23a}–\eqref{proof_23b} decouples the binary terms in \eqref{T_tot_cp}, yielding \eqref{T_tot_cp_reform}..
\end{proof}

\subsection{Two-Timescale Decomposition of the JSCP Problem}
Problem~\eqref{eq:Main} is solved using a two-timescale formulation that separates long-term system configuration from short-term adaptive control. Specifically, long-term decisions $\boldsymbol{b}[t]$ determine user association and service configuration at time-slot $t$, capturing relatively stable system characteristics and ensuring system stability. In contrast, short-term decisions, including UE transmit power $\boldsymbol{p}[t_s]$ and task offloading $\boldsymbol{\beta}[t_s]$, are optimized at each timeslot $t_s$ to adapt to time-varying network conditions.

\textit{1) Long-term subproblem (L-SP)}: The long-term subproblem optimizes service placement, user association, and computation delegation at the ES and CS layers over the long-term time-slot $t$. Based on \eqref{eq:Main} and Lemmas~\ref{LM_lemma1}--\ref{LM_lemma3}, the L-SP with respect to $\boldsymbol{b}[t]$ is formulated as
 \begin{subequations} \label{eq:long-term}
    \begin{IEEEeqnarray}{cl}
         \textbf{L-SP: }&\, \mathop {\min }\limits_{\boldsymbol{b}[t]} \xi_L \left( {{\boldsymbol{b}}\left[ t \right]} \right)\label{eq:long-terma} \\
        &\st \quad \eqref{eq:Mainb}, \eqref{eq:Maind}, \eqref{eq:Maine}, \eqref{lemma1}, \eqref{lemma2}, \eqref{lemma3} \label{eq:long-termb}\quad\\
        &\qquad\, {\boldsymbol{b}}\left[ t \right] \in  \boldsymbol{\mathscr{B}}'[t]\label{eq:long-termc} 
    \end{IEEEeqnarray}
\end{subequations}
where $\boldsymbol{\mathscr{B}}'[t]\buildrel \Delta \over =\big\{\boldsymbol{b}[t]\Bigl|\, \eqref{setb_a}, \eqref{setb_b}, \eqref{setb_e} \big\} $. Here, constraints \eqref{setb_c} and \eqref{setb_d} are replaced by the tighter structural condition in \eqref{lemma2}, which reduces redundancy and simplifies the feasible region.

\textit{2) Short-term subproblem (S-SP):} Given the optimal solution $\boldsymbol{b}^*$ obtained from \eqref{eq:long-term}, the short-term subproblem optimizes task offloading and UE transmit power, and is formulated as
\begin{subequations} \label{eq:short-term}
	\begin{IEEEeqnarray}{cl}
		 \textbf{S-SP:}&\mathop {\min }\limits_{{\boldsymbol{\beta }[t_s]},{\boldsymbol{p}[t_s]}} \xi_S \left( {{\boldsymbol{\beta }}\left[ {{t_s}} \right],{{\boldsymbol{p}}}\left[ {{t_s}} \right]} \right)\label{eq:short-terma} \\
		& \st \quad  \eqref{eq:Mainb}-\eqref{eq:Maind}, \eqref{setphi}, \eqref{setp}  \label{eq:short-termb}.
	\end{IEEEeqnarray}
\end{subequations}

\section{Two-Timescale Latent-Action DRL}\label{sec_Solutions} 
The resulting subproblems in~\eqref{eq:long-term} and \eqref{eq:short-term} remain constrained and highly non-convex, making them difficult to solve using conventional methods, especially in large-scale networks. To address this challenge, we propose 2T-MDRL-LA, a two-timescale multi-layer DRL framework with a latent action space. For the long-term subproblem~\eqref{eq:long-term}, the large discrete action space is handled by a latent-action PPO (LA-PPO) agent, which reduces complexity while ensuring feasibility and improving convergence. For the short-term subproblem~\eqref{eq:short-term}, which involves continuous decision variables such as task offloading ratios and UE transmit power, a PPO-based agent is employed to adaptively optimize these variables at each time-slot $t_s$.


\subsection{Proximal Policy Optimization (PPO)} \label{PPO}
PPO is an on-policy actor–critic algorithm that ensures stable and efficient policy updates by constraining the deviation between successive policies. This makes PPO well-suited for the proposed two-timescale framework, where stable learning is required under dynamic environments and hybrid decision structures involving both discrete and continuous variables.
PPO optimizes a clipped surrogate objective to prevent overly large policy updates. The policy loss at time-slot $t$ is given by
{\small\begin{IEEEeqnarray}{cl}\label{eq:clip_objective}
\mathcal{L}_{\pi}(\theta) = \mathbb{E}_t \!\left[ \min\!\left( r_t(\theta)\hat{A}_t,\; \operatorname{clip}\!\left(r_t(\theta),\,1-\epsilon,\,1+\epsilon\right)\hat{A}_t\right) \right]\quad\,\,
\end{IEEEeqnarray}}where $r_t(\theta)=\exp\!\left(\log\pi_\theta(a_t\mid s_t)-\log\pi_{\theta_{\mathrm{old}}}(a_t\mid s_t)\right)$ denotes the probability ratio between the current and previous policies for selecting action $a_t$ in state $s_t$. The clipping parameter $\epsilon$ controls the update step size and prevents instability during training. The advantage estimate $\hat{A}t$ is computed using Generalized Advantage Estimation (GAE), \textit{i.e.} $\hat{A}_t = \sum_{l=0}^{N_{\mathrm{rol}}-t-1} (\gamma\lambda)^l \,\delta_{t+l}$, where $\gamma$ is the discount factor, $\lambda$ governs the bias–variance trade-off, and $N_{\mathrm{rol}}$ denotes the rollout length. The temporal-difference (TD) error is defined as $\delta_t = r_t + \gamma V_\phi(s_{t+1}) - V_\phi(s_t),$ where $r_t$ denotes the immediate reward at time-slot $t$, and $V_\phi(\cdot)$ denotes the state-value function estimated by the critic.

PPO adopts an actor–critic architecture with a shared feature extractor, enabling joint policy learning and value function approximation to improve sample efficiency and reduce gradient variance. The overall training objective is given by
\begin{IEEEeqnarray}{cl}\label{eq:total_loss_PPO}
\mathcal{L}_{\mathrm{total}}(\theta,\phi) = \mathcal{L}_{\pi}(\theta) - \lambda_V \mathcal{L}_V(\phi) + \lambda_H \mathcal{H}\!\left[\pi_\theta(\cdot  \mid  s_t)\right]\qquad
\end{IEEEeqnarray}
where $\mathcal{L}_V(\phi)$ denotes the critic loss, defined as the mean squared error between $V_\phi(s_t)$ and the target return. The entropy term $\mathcal{H}[\cdot]$ encourages exploration, while $\lambda_V$ and $\lambda_H$ are weighting coefficients that balance value function learning and exploration.

\subsection{Long-term Action Space Design} \label{Longterm_action_design}
Each action in the long-term action space $\mathscr{A}_L$ specifies the system configuration at time-slot $t$, including user association, service placement, and ES–ES and ES–CS cooperation. The long-term action vector is defined as
\begin{equation}
    \boldsymbol{a}^L[t] = \boldsymbol{b}[t] \in \mathscr{A}_L
\end{equation}
which determines the service placement and coordination policy. The action space is binary-valued,
$\mathscr{A}_L \subseteq \{0,1\}^{D_{\mathrm{full}}}$ with $ 
    D_{\mathtt{full}} = MK + NK + MK^2 + MK$,
and cardinality $\big| \mathscr{A}_L \big| = 2^{D_{\mathtt{full}}}$,  which becomes intractable in large-scale systems.

\begin{figure}[t]
	\centering
	\includegraphics[width=0.9\columnwidth,trim={0cm 0.0cm 0cm 0.0cm}]{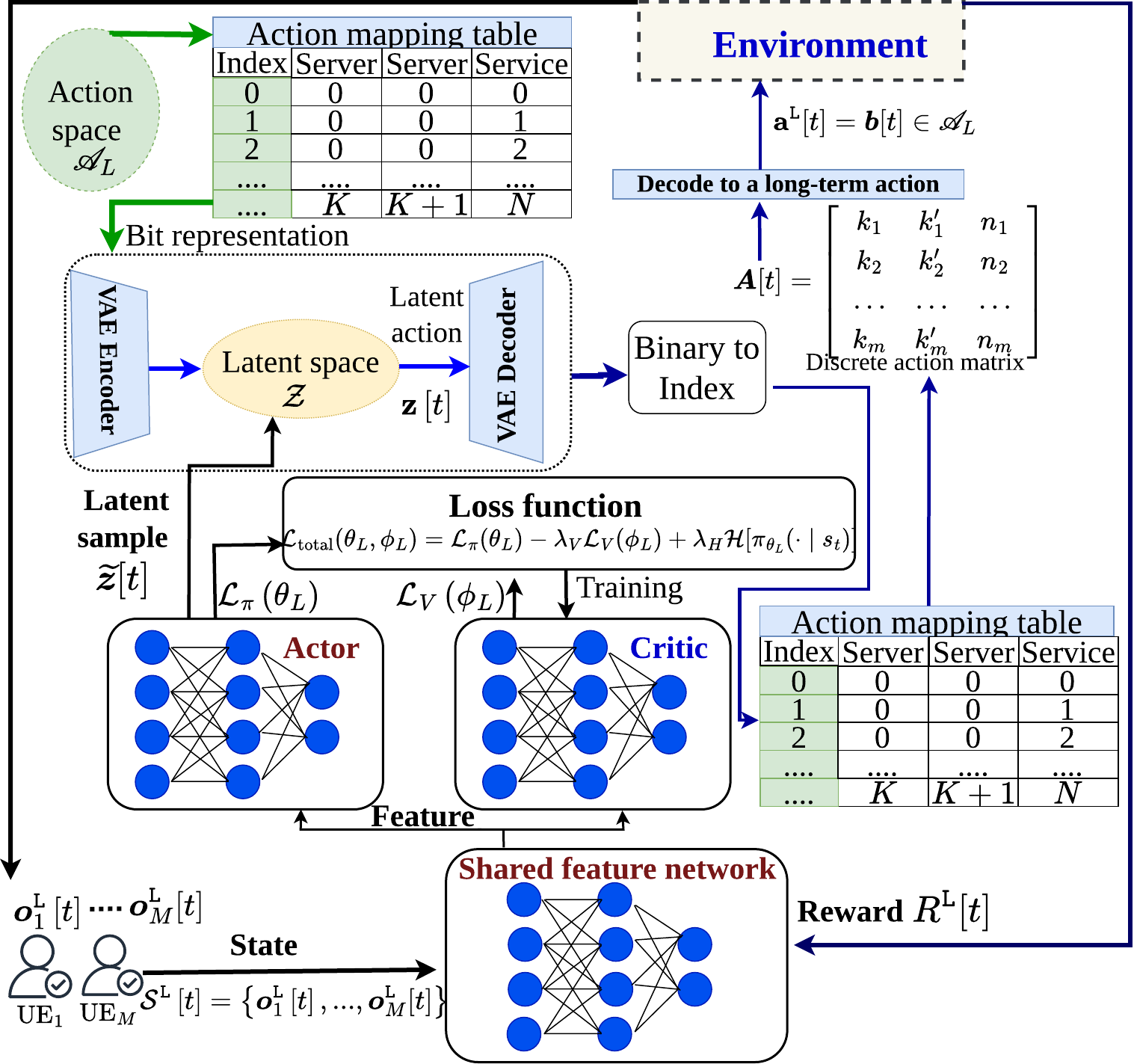}
	\caption{Long-term decision phase with LA-PPO.}
	\label{fig:2T_MADRL_model}
\end{figure}

To alleviate this issue, as illustrated in Fig.~\ref{fig:2T_MADRL_model}, the full action space is first transformed into an action mapping table $\mathcal{V}_{3 \times K(K+1)N}$, where each row corresponds to a candidate action $\boldsymbol{a}[t]$ defined by a triplet of the associated server, executing server, and deployed service. The agent selects actions by indexing this table based on its observation. However, the complexity still scales as $\mathcal{O}(NK^2)$.

To further reduce dimensionality, we introduce a latent action representation based on a VAE~\cite{chen2024resource}. 
The encoder $\delta(\cdot)$ and decoder $\psi(\cdot)$ are defined as
\begin{align}\label{eq:encoder_decoder_VAE}
    \delta: \mathcal{V} \mapsto \mathcal{Z}, 
    \qquad
    \psi: \mathcal{Z} \mapsto \mathcal{V}
\end{align}
where $\mathcal{Z}$ is a low-dimensional latent space with $|\mathcal{Z}| \ll |\mathcal{V}|$. Each binary vector $\boldsymbol{v}[t] \in \{0,1\}^{|\boldsymbol{v}|}$ representing the index of an action $\boldsymbol{a}[t]$ is encoded into a latent vector $\boldsymbol{z}[t] = \delta(\boldsymbol{v}[t])$. 
\begin{itemize}
\item For PPO-based continuous-action algorithms, the policy directly outputs a continuous latent vector $\boldsymbol{z}[t] \in \mathcal{Z}$.

\item For discrete-action algorithms (\textit{e.g.} DDQN), exploration is introduced by perturbing the latent vector as
\begin{align}
    \boldsymbol{z}[t] 
    = \{z'_i\}_{i=1}^{|\boldsymbol{z}[t]|}
    + \{\eta_i\}_{i=1}^{|\boldsymbol{z}[t]|}
\end{align}
where $z'_i\in\{0,1\}$ represents the discrete latent components and $\eta_i\sim\mathcal{CN}(0,1)$ denotes additive exploration noise.
\end{itemize}
The latent vector $\boldsymbol{z}[t]$ is then decoded via $\psi(\cdot)$ to obtain a binary representation $\boldsymbol{v}[t] = \psi(\boldsymbol{z}[t])$. This binary vector is then mapped to a scalar action index as
\begin{align}\label{eq:binary_to_scalar_index}
u = \sum_{i=1}^{|\boldsymbol{v}[t]|} \boldsymbol{v}_i2^{i-1}
\end{align}
where $u \in \{0,1,\ldots,2^{|\boldsymbol{v}[t]|}$ $-1\}$ identifies the corresponding entry in the action mapping table, from which $\boldsymbol{a}[t]$ is retrieved.

\subsection{Proposed 2T-MDRL-LA Framework }\label{sec_MDRL_Solutions}

\begin{algorithm}[t]
\small
    \setstretch{0.95}  
    \begin{algorithmic}[1]
        \protect\caption{The Overall Algorithm for Solving Problem~\eqref{eq:Main}} \label{alg_1}
        \global\long\def\algorithmicrequire{\textbf{Initialization:}}    
        \REQUIRE
         Set $T, S$ and initialize the environment for problem \eqref{eq:Main}.
        \STATE Set \textit{long-term flag} = $\mathtt{TRUE}$;
        \global\long\def\algorithmicrequire{\textbf{Main Loop:}}
        \REQUIRE 
        \FOR{each frame $t = \{1,2,\cdots,T\}$}
            \IF{\textit{long-term flag} is $\mathtt{TRUE}$}
                \STATE Solve \textbf{L-SP} \eqref{eq:long-term} via the long-term agent to obtain  ${{\boldsymbol{b}}^*}[t]$;
                \STATE Set \textit{long-term flag} = $\mathtt{FALSE}$;
            \ENDIF
            \FOR{each time-slot ${t_s} = \{{t_1},{t_2},\cdots,{t_S}\}$}
                \STATE Given $\boldsymbol{b}$, solve \textbf{S-SP} \eqref{eq:short-term} via the short-term agent to obtain (${{\boldsymbol{\beta }}^*[t_s]},$ ${{\bf{p}}^*[t_s]}$);
                \IF{a new requested service is unavailable at ESs or latency constraints are violated} 
                    \STATE Set \textit{long-term flag} = $\mathtt{TRUE}$;
                    \STATE \textbf{break}
                \ENDIF
            \ENDFOR
        \ENDFOR
    \end{algorithmic}
\end{algorithm}

\begin{figure}[t]
	\centering
	\includegraphics[width=0.95\columnwidth,trim={0cm 0.0cm 0cm 0.0cm}]{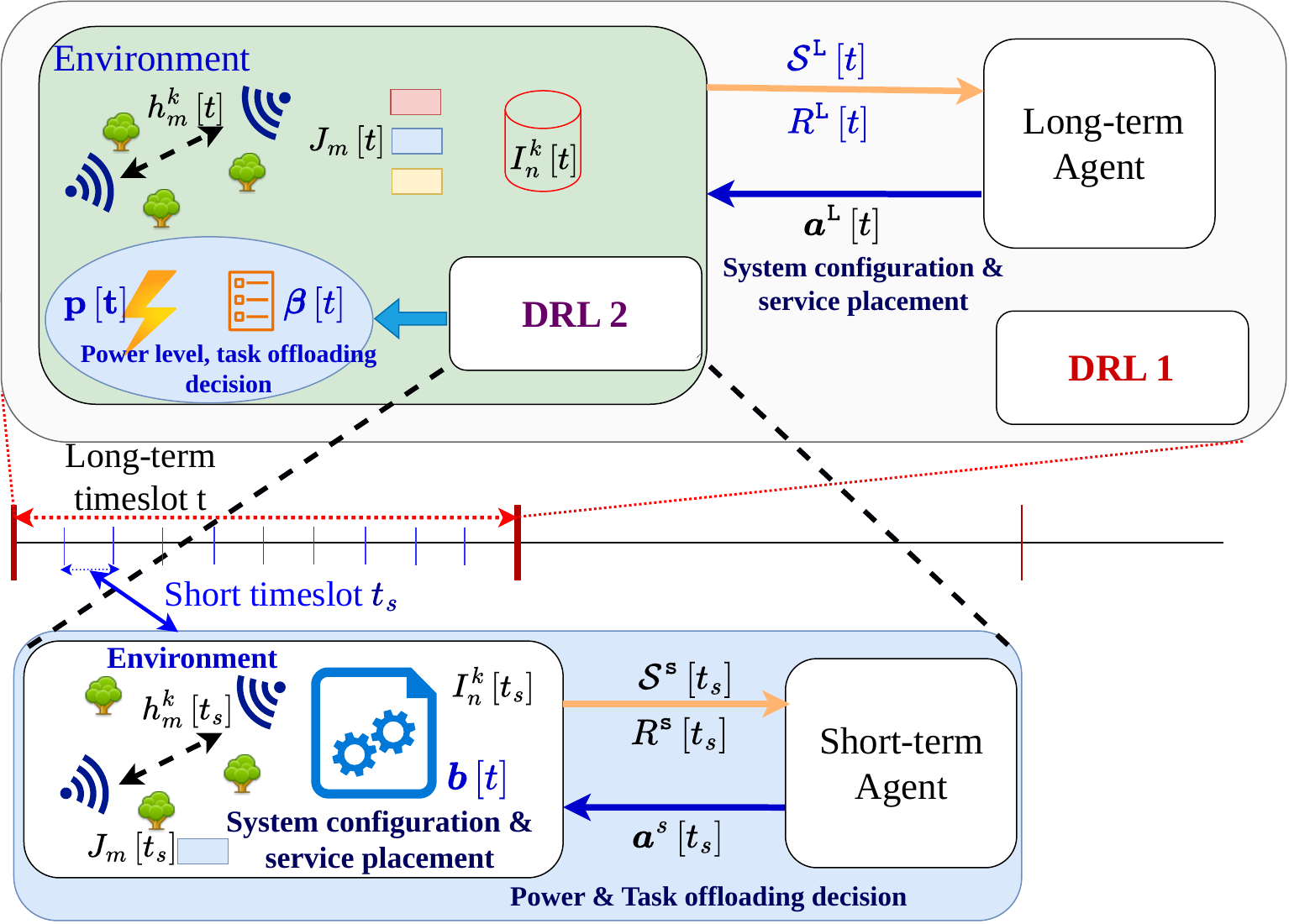}
	\caption{The proposed 2T-MDRL-LA framework.}
	\label{fig:2T_MADRL_Alg}
\end{figure}

Algorithm~\ref{alg_1} outlines the key steps for solving problem~\eqref{eq:Main} via a two-timescale strategy. In Step 4, the long-term subproblem~\eqref{eq:long-term} is solved to determine the system configuration, which remains fixed during the subsequent short-term slots. The short-term agent then optimizes task offloading and user transmit power by solving~\eqref{eq:short-term} in Step 8 at each time slot.
If new service requests arise or latency requirements are not satisfied, the long-term configuration is updated by resetting the long-term flag. This interaction enables adaptive optimization under dynamic network conditions while reducing computational complexity. 

The detailed solution procedures for the long-term and short-term agents are presented in the following subsections.

\textit{1) Long-term Agent for Solving \textbf{L-SP} \eqref{eq:long-term}}: As shown in Fig.~\ref{fig:2T_MADRL_model}, the long-term action determines the system configuration, including user association, service placement, and computational delegation. The corresponding binary vector $\boldsymbol{b}^*[t]$ is obtained via the proposed LA-PPO agent.\vspace{3pt}

\noindent\textit{\textbf{Observation:}} The long-term decision depends on service demand, channel conditions, user transmission behavior, and system congestion. At the beginning of each time-slot $t$, each UE $m$ reports task information $J_{m,n}[t]$, transmit power $p_m[t]$, and offloading decision $\beta_m[t]$, together with channel gain $\boldsymbol{g}_m[t]$ and queue length vector $\boldsymbol{i}[t]$ at ESs and CS. The observation for UE $m$ is
$\boldsymbol{o}_m^L = \{ J_{m,n}[t], \boldsymbol{g}_m[t], p_m[t], \beta_m[t], \boldsymbol{i}[t] \}$, and the global observation is
\begin{align}
    \label{eq:all_users_states}
    \mathcal{S}^L[t] = \{\boldsymbol{o}_1^L[t], \boldsymbol{o}_2^L[t],\ldots, \boldsymbol{o}_M^L[t]\}.
\end{align}

\noindent\textit{\textbf{Action:}} Given the observation $\mathcal{S}^L[t]$, the long-term action $\boldsymbol{a}^L[t]$ is generated by combining the PPO-based agent, the VAE, and the action mapping table. As shown in Fig.~\ref{fig:2T_MADRL_model}, the observation is first processed by a shared feature network to extract a latent representation, improving sample efficiency and reducing redundant feature learning. The actor head then parameterizes the policy, while the critic head estimates the state value function to stabilize training.

The model outputs a sampled latent vector, which is passed through an element-wise sigmoid function to obtain the latent action vector $\boldsymbol{\widehat{z}}[t]$. Specifically, $\boldsymbol{\widehat z}[t] = \{\boldsymbol{z}_m[t]\,\Bigl|\boldsymbol{z}_m[t] \in \mathcal{Z},\, m\in\mathcal{M}\} $, where each $\boldsymbol{z}_m[t]$ corresponds to the observation of UE $m$. This latent vector is then decoded into a binary vector via $\psi(\cdot)$ in~\eqref{eq:encoder_decoder_VAE}, and subsequently mapped to an action $\boldsymbol{a}[t] \in \mathcal{V}$ using~\eqref{eq:binary_to_scalar_index}.

After applying the mapping mechanism, the resulting action matrix $\boldsymbol{A}[t]$ for all UEs is given by
\begin{align}
\label{eq:all_actions}
\boldsymbol{A}[t] = \left[
\begin{array}{ccc}
k_1 & k'_1 & n_1 \\
k_2 & k'_2 & n_2 \\
\vdots & \vdots & \vdots \\
k_M & k'_M & n_M
\end{array}
\right]\in \mathbb{R}^{M \times 3}
\end{align}
where each row specifies the associated server $k \in \mathcal{K}$, the execution server $k' \in \mathcal{K}$, and the deployed service $n \in \mathcal{N}$. Finally, the long-term action $\boldsymbol{a}^L[t] \in \mathscr{A}_L$ is obtained from $\boldsymbol{A}[t]$.\vspace{3pt}

\begin{algorithm}[t]
\footnotesize
\LinesNumbered
\caption{LA-PPO-Based Long-Term Training}
\label{alg:PPO-longterm}

\SetKwInOut{Initialization}{Initialization}

\Initialization{Set hyperparameters $\gamma, \lambda, \epsilon_{\text{clip}}, N_{\text{rol}}, K_{\text{epoch}}, B$; initialize actor $\theta_L$, critic $\phi_L$, and old policy $\theta_L^{\text{old}} \leftarrow \theta_L$; pre-train VAE and construct action mapping table.
}

\For{$\text{episode} = 1,\dots,N_{\text{episodes}}$}{
    \textbf{Reset:} Collect all users' states $\mathcal{S}^\mathtt{L}[0]$ according to~\eqref{eq:all_users_states}, $t \leftarrow 0$, $\text{done} \leftarrow \text{False}$\;

    \While{\textbf{not} $\text{done (each step t)} $}{
   
        Decode latent action $\boldsymbol{z}[t]$ to obtain $\boldsymbol{A}[t]$ via~\eqref{eq:encoder_decoder_VAE}-\eqref{eq:binary_to_scalar_index} and mapping;

       Convert $\boldsymbol{A}[t]$ to system configuration $\boldsymbol{a}^L[t]$ and execute;

       Observe next state $\mathcal{S}^L[t+1]$, global reward $R^L[t]$, individual rewards $\{r_m^L[t]\}$, and $\textit{done}$;

       Store transition $(\boldsymbol{o}_m^L[t], \boldsymbol{z}_m[t], \log \pi_{\theta_L}(\tilde{\boldsymbol{z}}_m[t]|\boldsymbol{o}_m^L[t]),$ $ r_m^L[t], V_{\phi_L}(\boldsymbol{o}_m^L[t]), \textit{done})$ in buffer $B$;\\

       Set  $\mathcal{S}^\mathtt{L}[t] \leftarrow \mathcal{S}^\mathtt{L}[t+1], \, t \leftarrow t + 1$\;

    }
     \If{$| B|  \ge  N_{\text{rol}}$ }{
            Compute returns and advantages $\hat{A}_t$ using GAE\;

            \For{$k = 1,\dots,K_{\text{epoch}}$}{
                Sample minibatches of size $B$\;
                Update $\theta_\mathtt{L}$ using \eqref{eq:clip_objective} ;
                Update $\phi_\mathtt{L}$ using the value loss defined in~\eqref{eq:total_loss_PPO}\;
            }
            $\theta_{\mathtt{L}_{\text{old}}} \leftarrow \theta_\mathtt{L}$\;
            Clear buffer $B$;
        }
}
\end{algorithm}
\noindent\textit{\textbf{Reward:}} At the beginning of each time-slot $t$, the long-term agent applies the action $\boldsymbol{a}^L[t]$ to the environment, thereby determining the system configuration, including user association, service placement, and computational delegation, under the given user offloading decisions and transmit power levels. During time-slot $t$, the system evolves over $S$ short-term time slots.

At the end of the long-term time slot, the e2e latency experienced by each UE across the $S$ short-term slots is collected and averaged. This aggregated latency information is then fed back to the agent to evaluate the quality of the selected long-term action. Accordingly, the reward is defined as the negative average e2e latency of all users, given by
 \begin{equation}\label{eq:reward_longterm}
     R^\mathtt{L}[t] = -\frac{1}{M} \sum_{m\in \mathcal{M}}\overline{\tau}_{m}^{\mathtt{e2e}}\left[ {{t}} \right] 
 \end{equation}
where $\overline{\tau}_{m}^{\mathtt{e2e}}\left[ {{t}} \right] = \sum_{s \in S}{\tau_m^{\mathtt{e2e}}[t_s]}/S$ denotes the average e2e latency of UE $m$ over the $S$ short-term slots within the long-term time-slot $t$.

To facilitate training of the LA-PPO-based long-term agent, each UE $m$ collects transition tuples of the form
$\left(\boldsymbol{o}_m^\mathtt{L}[t],\; \boldsymbol{z}_m[t],\; \log \pi_\theta( \boldsymbol{\widetilde z}_m[t] \mid \boldsymbol{o}_m^\mathtt{L}[t]),\; r_m^\mathtt{L}[t],\; V_\phi(\boldsymbol{o}_m^\mathtt{L}[t]) \right)$
which are stored in the learning buffer. Here, $r_m^L[t]$ represents the reward associated with UE $m$ (derived from the average e2e latency), and $\log \pi_\theta( \boldsymbol{\widetilde z}_m[t] \mid \boldsymbol{o}_m^\mathtt{L}[t])$ denotes the log-probability of selecting the sampled latent action $\boldsymbol{\widetilde{z}}_m[t]$ under the previous policy. The LA-PPO-based long-term agent is then trained according to the procedure described in Algorithm~\ref{alg:PPO-longterm}.\vspace{3pt}

\begin{algorithm}[t]
\footnotesize
\LinesNumbered
\caption{PPO-based Short-term Training}
\label{alg:shortterm-training}
\SetKwInOut{Initialization}{Initialization}
\Initialization{Set $\gamma, \lambda, \epsilon_{\text{clip}}, N_{\text{rol}}, B$; load trained long-term model; initialize $\theta_s, \phi_s$, and $\theta_s^{\text{old}} \leftarrow \theta_s$.}

\For{$\text{episode} = 1,\dots,N_{\text{episodes}}$}{
    Reset environment\;
    Obtain long-term environment state $\mathcal{S}^\mathtt{L}[t]$\;
    Generate system configuration $\boldsymbol{a}^\mathtt{L}[t]$ using long-term policy $\pi_{\theta_\mathtt{L}}$ and $\mathcal{S}^\mathtt{L}[t]$\;
    {Execute $\boldsymbol{a}^L[t]$}\;
    $\text{done} \leftarrow \text{False}$, $t_s \leftarrow 0$\;

    \While{\textbf{not} $\text{done (each step $t_s$)}$}{
        Execute short-term actions $\{\boldsymbol{a}_m^{\mathtt{s}}[t_s]\}_{m=1}^M$\;
        Observe global reward $R^s[t_s]$, individual rewards $\{r_m^s[t_s]\}_{m=1}^{M}$, and \textit{done}\;
        Store  $\big(\boldsymbol{o}_m^\mathtt{s}[t_s],\; \boldsymbol{a}_m^\mathtt{s}[t_s],\; \log \pi_{\theta_\mathtt{s}}(\boldsymbol{a}_m^\mathtt{s}[t_s] \mid \boldsymbol{o}_m^\mathtt{s}[t_s]),\;$ $r_m^\mathtt{s}[t_s],\;  V_{\phi_\mathtt{s}}(\boldsymbol{o}_m^\mathtt{s}[t_s]),\; \textit{done}\big)$ in buffer $B$ for all UEs\; 
        Set $t_s \leftarrow t_s + 1$\;

    }
    \tcp{Policy update}
        Update short-term policy $\pi_{\theta_s}$ following steps (11)-(19) in Algorithm ~\ref{alg:PPO-longterm}\;
        Set $t \leftarrow t + 1$\;
}
\end{algorithm}

 \textit{2) Short-term Agent for Solving \textbf{S-SP}~\eqref{eq:short-term}}: To solve subproblem~\eqref{eq:short-term}, which involves mixed binary (offloading decisions $\boldsymbol{\beta}[t_s]$) and continuous (transmit power $\boldsymbol{p}[t_s]$) variables, we employ a PPO-based short-term agent to minimize the average e2e latency. The state, action, and reward are defined as follows.

\noindent\textit{\textbf{ State:}}
At the beginning of each short-term time-slot $t_s$, the agent observes
\begin{align}\label{eq:short_term_state}
    \mathcal{S}^{\mathtt{s}}[t_s] = \{\boldsymbol{o}_m^{\mathtt{s}}[t_s]\}_{m \in \mathcal{M}}
\end{align}
where $\boldsymbol{o}_m^{\mathtt{s}}[t_s]$ includes the effective channel gain $\boldsymbol{g}_m[t_s]$, task information $J_{m,n}[t_s]$, system configuration $\boldsymbol{b}[t]$, and queue lengths $\boldsymbol{i}[t_s]$ at ESs and CS.

\noindent\textit{\textbf{ Action:}} For each UE $m$, the agent jointly determines the offloading decision $\beta_m[t_s]$ and transmit power $p_m[t_s]$. To handle the mixed variables, the transmit power is discretized into ten levels $\hat{p}_m[t_s] \in \{1,\ldots,10\}$, such that
\begin{align}
    p_m[t_s] = \frac{\hat{p}_m[t_s]}{10} P_m^{\max}.
\end{align}
The action vector is defined as
\begin{align}\label{eq:short_term_action}
    \boldsymbol{a}^{\mathtt{s}}[t_s] \buildrel \Delta \over = \{\boldsymbol{a}_m^s[t_s]\}_{m \in \mathcal{M}}
\end{align}
where $\boldsymbol{a}_m^s[t_s] = \{\beta_m[t_s], \hat{p}_m[t_s]\}_{m \in \mathcal{M}}$ is obtained using the policy $\pi_{\theta_s}$.

\noindent\textit{\textbf{ Reward:}}
The global reward equals the negative average e2e latency of all UEs, such as
\begin{align}
    R^{\mathtt{s}}[t_s] = -\frac{1}{M} \sum_{m \in \mathcal{M}}\tau_m^{\mathtt{e2e}}[t_s]
\end{align}
where $-\tau_m^{\mathtt{e2e}}[t_s] \equiv r_m^s[t_s]$ is the reward for an action $\boldsymbol{a}_m^\mathtt{s}[t_s]$.
The training procedure is given in Algorithm~\ref{alg:shortterm-training}.

\section{Numerical Results}\label{sec_NumericalResults}
 \subsection{Simulation Setting}\label{sec_simulation_setup}
\textbf{\textit{Network settings:}} We consider an HECC-aided IoT network where APs, ESs, and UEs are deployed over a $100 \times 100$~m area~\cite{huynh_latency}. The CS is located at a distance of $d_{k,c} = 10$~km from each ES $k$. The ESs are positioned at $(50,0)$, $(0,50)$, $(-50,0)$, and $(0,-50)$ for $K=4$, and at $(50,0)$ and $(-50,0)$ for $K=2$. The wireless channel is characterized by large-scale fading, given by
$ g_m^k[t] = 10^{\frac{\mathrm{PL}(d_{mk}[t])}{10}},$ where the path loss (in dB) is defined as $\mathrm{PL}(d_{mk}) = -35.3 - 37.6 \log_{10}(d_{mk}[t]),$ with $d_{mk}[t]$ denoting the distance between UE $m$ and ES $k$~\cite{H.D.Tuan}. Moreover, the single-sided noise power spectral density is set to $-174$~dBm/Hz~\cite{Dinh:JSAC:Dec2017}.

We set the number of long-term frames to $T = 50$. UEs request services from ESs or the central cloud every five frames. The service request price for data transfer from ESs to the CS is set to $\rho^\mathtt{cs} = \$0.01$, following AWS intra-region data transfer pricing (US East–New York City)~\cite{aws_ec2_pricing}. This pricing model can be extended to other ES/CS providers. The remaining simulation parameters are summarized in Table~\ref{tab:Simulationparameter1}.

\textbf{\textit{PPO and BnB setup:}} The proposed 2T-MDRL-LA framework is implemented and trained in PyTorch. All experiments are conducted on a desktop computer running Windows~11 with a 3.5~GHz Intel Core~i9 processor and 64~GB RAM. The BARON solver, integrated with the YALMIP toolbox, is used to solve the BnB optimization problems.

To further evaluate the proposed method under service demand surges and load imbalance, we consider heterogeneous ES computational capacities of $[30, 40, 50, 60]$~GHz. The following scenarios are examined:
\begin{itemize}
    \item \textbf{Scenario~1:} UE locations follow a normal distribution, and service requests are updated every 10 long-term frames.
    \item \textbf{Scenario~2:} UE-ES distances vary after one long-term frame, where $20\%$ of UEs are located near ES~3 and ES~4, and $80\%$ are concentrated near ES~1 and ES~2.
\end{itemize}

 \begin{table}[t]
\centering  
\captionof{table}{Simulation Parameters}
\label{tab:Simulationparameter1}
\scalebox{0.75}{
    \begin{tabular}{l|l}
        \hline
        Parameter & Value \\
        \hline\hline
        System bandwidth, $W$ & 10 MHz \cite{huynh_latency}\\
        Maximum number of services, $N$  & 6 \cite{huynh_service_placement}\\
        Maximum installed services at ES, $N_k^{\max},\, \forall k$  & 4 \cite{huynh_service_placement}\\
        Processing rate of UE, $f_m$\, $\forall$ m & 1 GHz \cite{huynh_service_placement}\\
         Maximum computing capacity of ESs, $f_{\max }^k$ & $\left[ {30,60} \right]$ GHz\cite{Zhou_cost}\\
        Maximum computing capacity of CS, ${f_{\max}^{\cs}}$ & 100 GHz\\
        Backhaul $\&$ fronthaul capacity,
        $R_k^{k'}$ $\&$ $R_k^c$,\,$\forall k$ & 5 $\&$ 1 Gbps \cite{huynh_latency} \\
        Size of task, ${D_{m}},\,\forall m$ &  [0.1, 0.5] MBits \\
        Maximum delay requirement, $\tau_m^{\max },\,\forall m$ & 0.2 s \cite{huynh_service_placement}\\
        Required computation resource, $C_{m},\forall m$  & $\left[ {100,200} \right] \, \text{Mcycles} $ \cite{huynh_latency} \\
        Maximum UE's transmitted power, ${P_m^{\max}},\forall m$ & 23 dBm \cite{H.D.Tuan} \\
        Minimum data rate requirement, ${R_{\min }}$ & 1 Mbps\cite{huynh_latency} \\
        Price for installing service, ${\rho ^\mathtt{i}}$ & 0.1 $\$/$service \cite{Jiaming_2024} \\
        Price for uninstalling service, ${\rho ^\mathtt{u}}$ & 0.05 $\$$/service \cite{Jiaming_2024}  \\
        Price for operating service, ${\rho ^\mathtt{o}}$ & 0.1 $\$$/service \cite{Jiaming_2024}  \\
        Maximum average monetary cost, ${c^\mathtt{{\max }}}$ & ${\$}$4  \\
        UE's Maximum energy consumption, $e_m^{\max }$ & 1 Joule \cite{huynh_latency}\\
        Effective capacitance coefficient, ${\mu _m}$ & ${10^{ - 27}}\,\frac{Watt.{s^3}}{cycl{e^3}}$ \cite{Liu_resource_frame} \\
        
        Discount factor and GAE parameter, $\gamma \,\& \, \lambda $  & 0.99 $\&$ 0.95\\
        PPO clipping parameter, $\epsilon_{\text{clip}}$ & 0.2 \\
        Long-term and short-term PPO learning rates & $3\times10^{-3} \& 3\times10^{-4}$  \\
        Adam optimizer & (weight decay $10^{-5}$) \\
        Rollout length $\&$ minibatch size, $N_{\text{rol}}$ $\&$ B & $4096$ $\&$ 512\\

        \hline		  				
    \end{tabular}
}
\end{table}
\subsection{Benchmark Schemes}\label{sec_benchmark_schemes}
To evaluate the performance of the proposed method, we compare it with the following benchmark schemes:
\begin{itemize}
    \item ``LA-DDQN-DDQN":  L-SP is solved using DDQN with latent action (LA-DDQN), while  S-SP is solved using DDQN. 
    \item ``LA-DDQN-PPO": L-SP is addressed by LA-DDQN, whereas S-SP is solved using PPO.
 
    \item ``LA-PPO-DDQN": L-SP is solved using LA-PPO, while S-SP is handled by DDQN.

    \item  ``PPO (w/o LA)" and ``DDQN (w/o LA):  PPO and DDQN are applied to L-SP without using the LA space or mapping table. These serve as baselines to evaluate the impact of the latent action design.

    \item ``Random User Association (RUA)": User association is randomized, while the remaining decisions (service placement, power allocation, and task offloading) are optimized using PPO.

    \item ``w/o CDO": Computational delegation is disabled; tasks are executed only at the associated ES or locally at UEs.
    
    \item ``Without Service Placement Optimization (w/o SPO)": Service placement at ESs is fixed.
    
    \item ``Random Processing Task (Rand PT)": Task processing decisions in both ESs and CS are randomly assigned. 
\end{itemize}

\textit{Performance Metrics:} The performance is evaluated using four metrics: $(i)$ average global reward, $(ii)$ average latency per user, $(iii)$ task offloading ratio, and $(iv)$ resource utilization, defined as the proportion of CPU cycles effectively used for task processing. Low task loads at edge servers may lead to inefficient resource utilization. Following~\cite{Yang_2023}, the average resource utilization is defined as
\begin{equation}
\overline{\eta} = \frac{1}{TS \big(\sum_{k \in \mathcal{K}} f_{\max}^k + f_{\max}^{\mathtt{cs}}\big)} 
\sum_{t \in \mathcal{T}} \sum_{t_s \in \mathcal{S}} \widetilde{C}[t_s]
\end{equation}
where $\widetilde{C}[t_s]$ denotes the total number of CPU cycles processed during time-slot $t_s$.

\subsection{Numerical Results and Discussions}\label{sec_numerical_results_dicussions}
\begin{figure}[t]
	\centering
	\includegraphics[width=0.95\columnwidth,trim={0cm 0.0cm 0cm 0.0cm}]{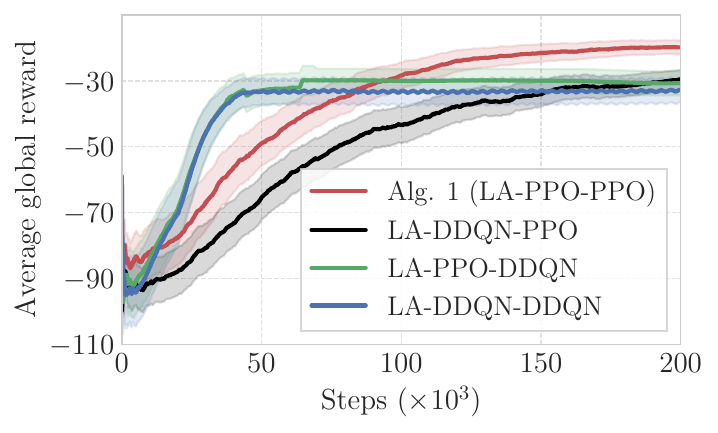}
	\caption{\small Convergence behavior of Alg.~\ref{alg_1} compared with other algorithms with $M = 30$.}
	\label{fig:overall_convergence_comparision}
\end{figure}

\begin{figure}[t]
    \centering
    \vspace{-2mm}
    \subfigure[Convergence of Alg.~\ref{alg:PPO-longterm} (LA-PPO) under different values of $\alpha$]{
        \includegraphics[width = 3in]{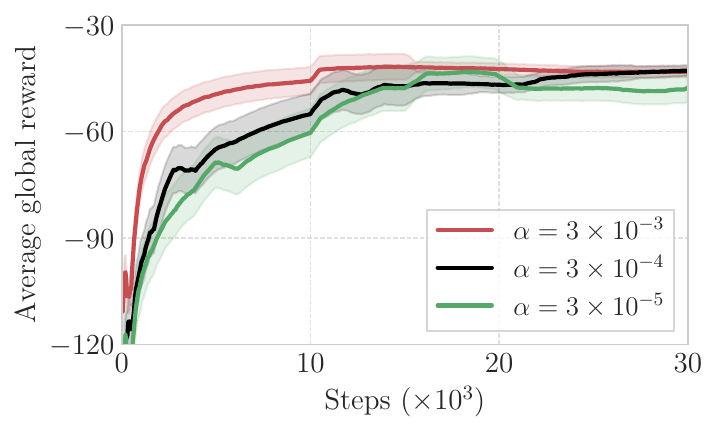}
        \label{fig:Agl2_convergence_learning_rate}
    }
    \subfigure[Convergence of Alg. ~\ref{alg:PPO-longterm} with different action space designs]{
        \includegraphics[width =3in]{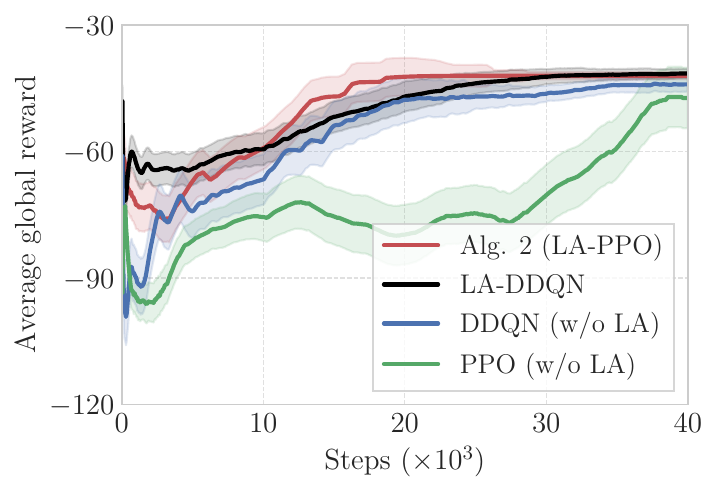}
        \label{fig:Agl2_convergence_latent_action}
    }
      \vspace{-2mm}
    \caption{\small Convergence analysis of Algorithm \ref{alg:PPO-longterm} with $M = 100$ . }
    \label{fig:Agl2_convergence}
\end{figure}

\textit{1) Convergence performance:} We first evaluate the convergence behavior of different DRL algorithms. As shown in Fig.~\ref{fig:overall_convergence_comparision}, the proposed Algorithm~\ref{alg_1} (LA-PPO-PPO) achieves the highest cumulative reward among all schemes. Although LA-PPO-DDQN and LA-DDQN-DDQN exhibit faster convergence, they converge to lower reward values due to suboptimal binary decision-making in both the long-term and short-term subproblems.

In Fig.~\ref{fig:Agl2_convergence}, we evaluate the effects of learning rate and action space design with $M = 100$. As shown in Fig.~\ref{fig:Agl2_convergence_learning_rate}, three learning rates $(\alpha = 3 \times 10^{-5},\, 3 \times 10^{-4},\, 3 \times 10^{-3})$ converge to similar high reward levels, while the largest learning rate achieves faster convergence. The impact of the LA design is illustrated in Fig.~\ref{fig:Agl2_convergence_latent_action}. Overall, Algorithm~\ref{alg:PPO-longterm} (LA-PPO) outperforms the compared approaches. The results show that the latent action space with mapping effectively reduces the action dimensionality when integrated with PPO (LA-PPO) and DDQN (LA-DDQN), leading to higher rewards and faster convergence at approximately $20 \times 10^{3}$ and $25 \times 10^{3}$ steps, respectively. In contrast, conventional PPO (w/o LA) and DDQN (w/o LA) achieve lower rewards and converge more slowly, reaching saturation only after about $40 \times 10^{3}$ and $35 \times 10^{3}$ steps, respectively. This demonstrates that Algorithm~\ref{alg:PPO-longterm} improves convergence speed by approximately $50\%$ compared to PPO (w/o LA).

\begin{figure}[t]
	\centering
	\includegraphics[width=0.95\columnwidth,trim={0cm 0.0cm 0cm 0.0cm}]{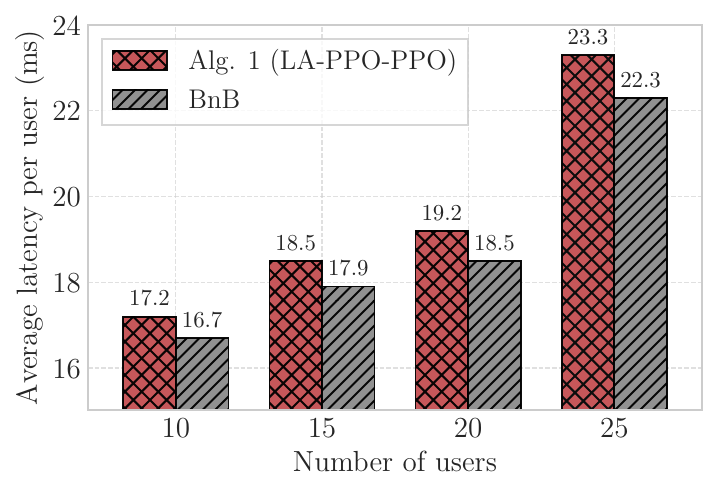}
	\caption{\small Comparison of average e2e latency per UE with BnB.}
	\label{fig:comparison_BnB}
\end{figure}

\textit{2) Performance comparison:} Fig.~\ref{fig:comparison_BnB} compares the proposed Algorithm~\ref{alg_1} with the optimal BnB method under the same input data, where BnB serves as the benchmark for global optimality. As shown, Algorithm~\ref{alg_1} yields a slightly higher average e2e latency per user, with an observed optimality gap of approximately $4\%$ over $M \in [10,25]$. This is because PPO does not explicitly explore the full combinatorial solution space; however, it learns effective policies that consistently produce high-quality solutions in practice. Consequently, the proposed method achieves near-optimal performance with significantly lower computational complexity than BnB.

\begin{figure}[t]
    \centering
    \vspace{-2mm}
    \subfigure[Comparison with baseline DRL schemes]{
        \includegraphics[width = 2.9in]{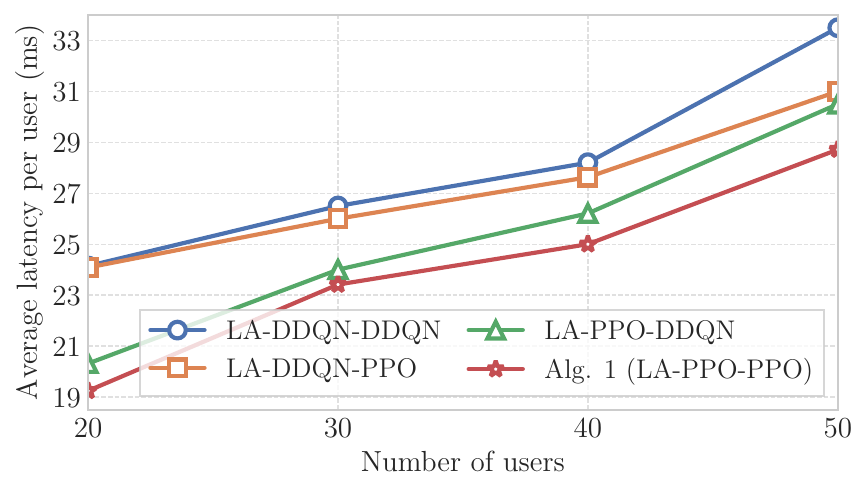}
        \label{fig:comparison_latency_baseline_ue}
    }
    \subfigure[Comparison with benchmark schemes]{
        \includegraphics[width =2.9in]{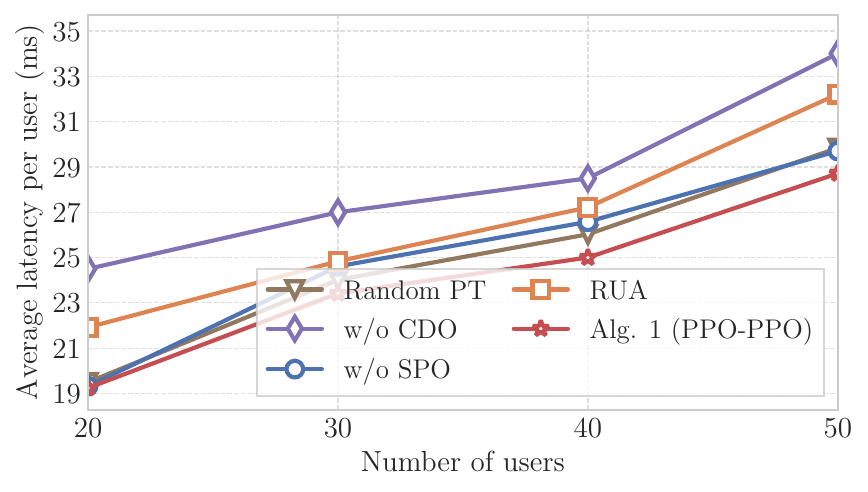}
        \label{fig:comparion_latency_benchmark_ue}
    }
      \vspace{-2mm}
    \caption{\small Average e2e latency per UE versus number of users.}
    \label{fig:comparison_latentcy_versus_UE}
\end{figure}

\begin{figure}[t]
	\centering
	\includegraphics[width=0.95\columnwidth,trim={0cm 0.0cm 0cm 0.0cm}]{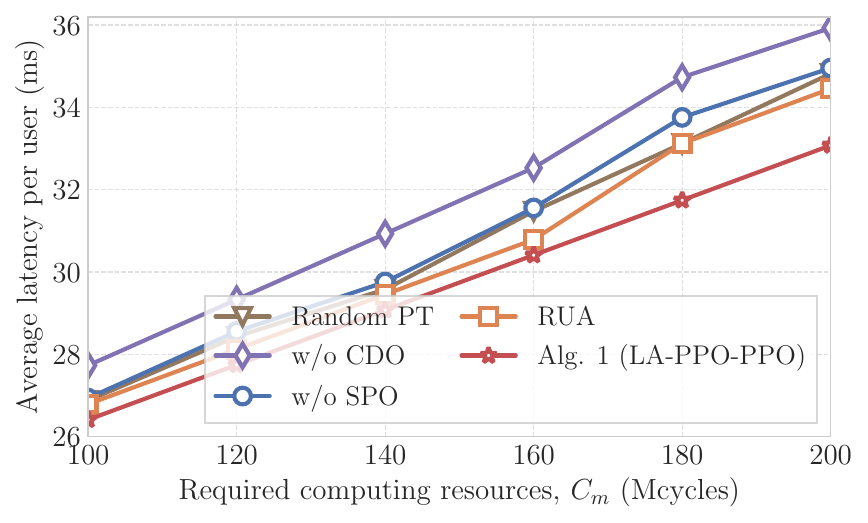}
	\caption{\small Average e2e latency per user versus required computational resources ($C_m$) under different benchmark schemes, with $M = 50$.}
	\label{fig:comparison_latency_verus_cycles}
\end{figure}

The impact of the number of UE on the average e2e latency across different schemes is shown in Fig.~\ref{fig:comparison_latentcy_versus_UE}. As $M$ increases, the average latency rises due to higher task arrival rates, increased congestion in ES–ES and ES–CS cooperation, and longer queueing delays at both ESs and  CS.
As shown in Fig.~\ref{fig:comparison_latency_baseline_ue} and Fig.~\ref{fig:comparion_latency_benchmark_ue}, Algorithm~\ref{alg_1} consistently outperforms both baseline and benchmark schemes, demonstrating the effectiveness of jointly optimizing service placement, user association, computational delegation, task offloading, and power allocation. Notably, Fig.~\ref{fig:comparion_latency_benchmark_ue} shows that Algorithm~\ref{alg_1} reduces the average latency by approximately $20.8\%$ compared to the w/o CDO scheme as $M$ increases from $20$ to $50$.
Fig.~\ref{fig:comparison_latency_verus_cycles} further shows that the average latency increases with the required computational resources $C_m$ due to higher processing delays, while Algorithm~\ref{alg_1}  consistently achieves the lowest latency among all compared schemes.

\begin{figure}[t]
    \centering
    \vspace{-2mm}
    \subfigure[Scenario 1]{
        \includegraphics[width = 3in]{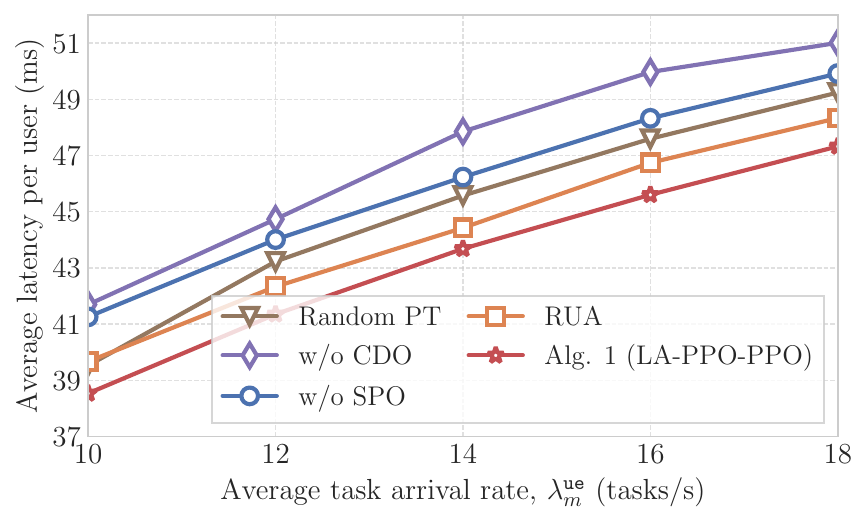}
        \label{fig:comparison_latency_task_arrival_rate_case1}
    }
    \subfigure[Scenario 2]{
        \includegraphics[width =3in]{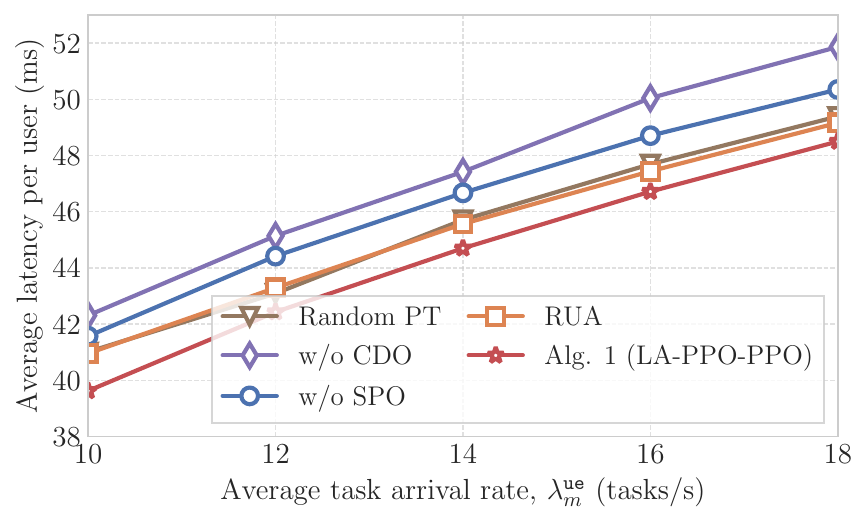}
        \label{fig:comparion_latency_task_arrival_rate_case2}
    }
      \vspace{-2mm}
    \caption{\small Average e2e latency per UE versus the task arrival rate.}
    \label{fig:comparison_latency_task_arrival_rate}
\end{figure}

We evaluate the impact of the average task arrival rate $\lambda^{\mathrm{ue}}_m$ on the average e2e latency in Fig.~\ref{fig:comparison_latency_task_arrival_rate}. We consider  Scenarios~1 and~2 with $M = 100$. As $\lambda^{\mathrm{ue}}_m$ increases from $10$ to $18$, the average latency rises due to increased queueing delays.
As shown in Fig.~\ref{fig:comparison_latency_task_arrival_rate_case1}, Scenario~1, where users are more uniformly distributed, achieves lower latency than Scenario~2 in Fig.~\ref{fig:comparion_latency_task_arrival_rate_case2}, where load imbalance among ESs leads to higher delays.
Notably, at $\lambda^{\mathrm{ue}}_m = 18$, the w/o CDO and w/o SPO schemes exhibit the highest and second-highest latencies, at approximately $52$ ms and $50.5$ ms, respectively. In contrast, Algorithm~\ref{alg_1} consistently outperforms all benchmark schemes, achieving an average latency reduction of about $7.4\%$ over $\lambda^{\mathrm{ue}}_m \in [10, 18]$ compared to the w/o CDO scheme. This highlights the effectiveness and adaptability of the proposed framework under dynamic and imbalanced workloads.

\begin{figure}[t]
	\centering
	\includegraphics[width=0.95\columnwidth,trim={0cm 0.0cm 0cm 0.0cm}]{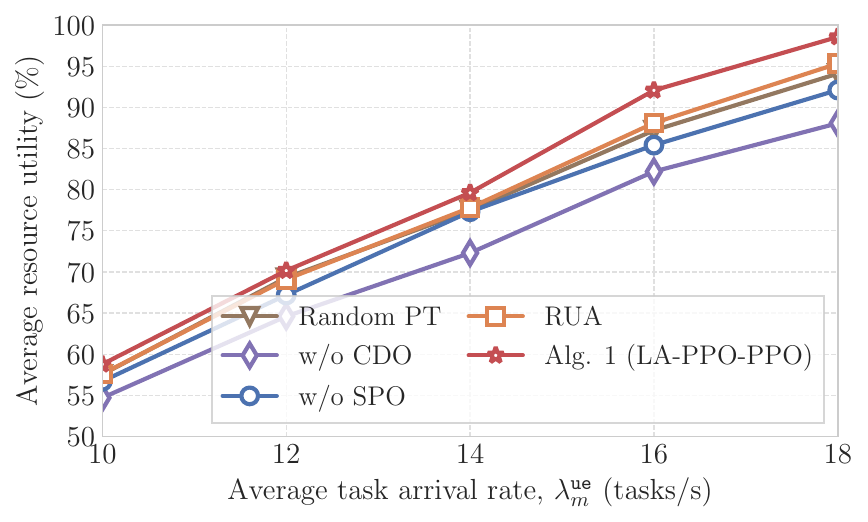}
	\caption{\small Average resource utilization versus the task arrival rate.}
	\label{fig:comparison_resource_utility}
\end{figure}

Fig.~\ref{fig:comparison_resource_utility} shows the average resource utilization versus the task arrival rate for different schemes with $M = 100$. Overall, resource utilization increases with the task arrival rate.
The w/o CDO scheme, which lacks computational delegation optimization, achieves the lowest utilization of approximately $86\%$ at $\lambda_m^\mathtt{ue} = 18$. In contrast, Algorithm~\ref{alg_1} achieves the highest utilization, with an average improvement of about $13\%$ over $\lambda_m^\mathtt{ue} \in [10,18]$ compared to w/o CDO.

\begin{figure}[t]
    \centering
    \vspace{-2mm}
    \subfigure[Comparison with baseline and benchmark schemes]{
        \includegraphics[width = 3in]{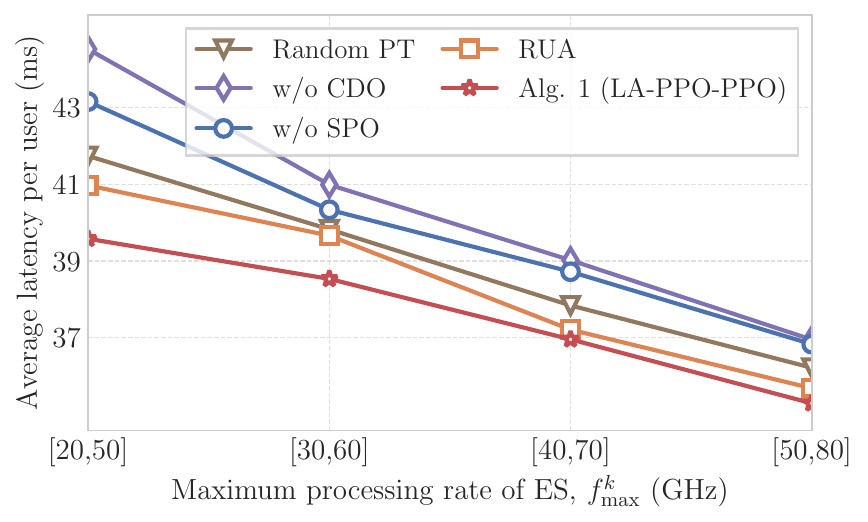}
        \label{fig:comparison_latency_verus_fes}
    }
    \subfigure[Impact on latency and offloading ratio under different UE processing rates]{
        \includegraphics[width =3in]{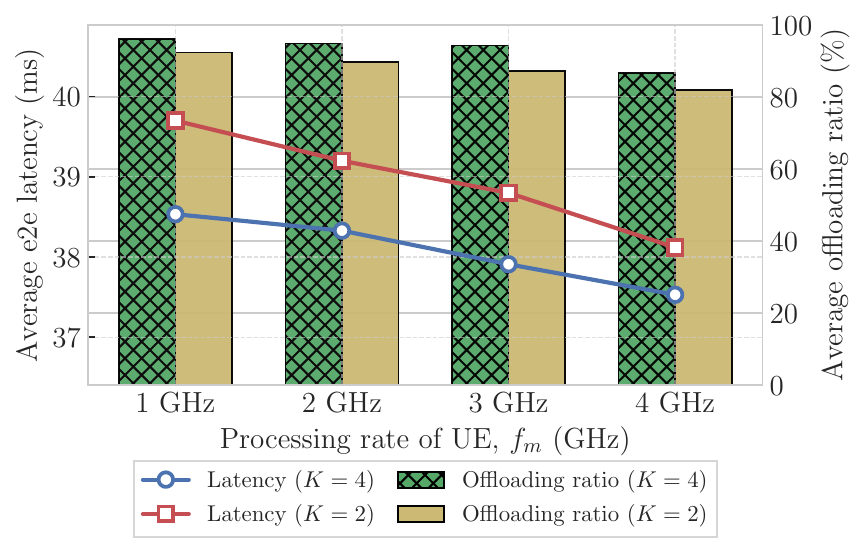}
        \label{fig:comparison_latency_verus_fue}
    }
      \vspace{-2mm}
    \caption{\small Average e2e latency per UE and offloading ratio versus maximum processing rate of ES and UE, with $M = 100$ and $\lambda^{\mathtt{ue}}_m = 10$. }
    \label{fig:comparison_latency_verus_fue_fes}
\end{figure}

Finally, Fig.~\ref{fig:comparison_latency_verus_fes} evaluates the impact of the maximum ES processing rate $f_{\max}^k$ on the average e2e latency under heterogeneous resources, ranging from $[20,30,40,50]$~GHz to $[50,60,70,80]$~GHz for $K=4$. As $f_{\max}^k$ increases, the average latency decreases due to reduced processing delay. Overall, Algorithm~\ref{alg_1} achieves the lowest latency among all compared schemes.
Fig.~\ref{fig:comparison_latency_verus_fue} shows the effect of UE computing capacity on the average e2e latency and offloading ratio over $50$ time frames. Both latency and offloading ratio decrease as the UE processing rate increases, indicating improved local processing capability. Moreover, the offloading ratio with $K=4$ is higher than that with $K=2$, as increased network computing capacity encourages more task offloading to the edge-cloud layer.

\section{Conclusion}\label{sec_conclusion}
In this paper, we investigated joint computational delegation and resource allocation in HECC-aided IoT networks, focusing on reducing the average e2e latency under heterogeneous edge computing capacities. The problem involves tightly coupled decisions across multiple layers, leading to significant challenges in scalability and tractability.
To address these challenges, we developed a two-timescale multi-layer DRL framework with latent action space (2T-MDRL-LA). By decoupling long-term system configuration and short-term resource allocation, the proposed framework effectively handles the interaction between discrete and continuous decisions. In addition, the latent action representation enables efficient exploration of the large combinatorial action space, improving learning stability and scalability.
Numerical results demonstrate that the proposed framework achieves superior performance in terms of latency reduction, resource utilization, and convergence speed compared to benchmark schemes, while closely approaching the optimal performance. These findings highlight the effectiveness of combining multi-timescale optimization with latent-action learning for large-scale and dynamic HECC systems.

\begingroup
\bibliographystyle{IEEEtran}
\bibliography{Bibliography}
\endgroup

\end{document}

%% file: Definition.tex
\usepackage{cite}
\usepackage{subfigure}
\usepackage{amssymb,amsmath}
\usepackage{algorithmic}

\usepackage{color}
\usepackage{epstopdf}
\usepackage{multirow}
\usepackage{multicol}
\usepackage{cases}
\usepackage{setspace}
\usepackage{amsthm}
\usepackage{caption}
\usepackage{mathrsfs}
\usepackage{delarray}
\usepackage{tikz}
\usepackage[ruled]{algorithm2e}
\SetKwInput{KwInput}{Input}
\SetKwInput{KwOutput}{Output}
\SetKwInput{Initialization}{Initialization}
\usepackage{balance}
\usepackage[mathscr]{euscript}
 \let\mathscr\relax
\usepackage[scr]{rsfso}

\allowdisplaybreaks
\usepackage{bbm}

\newtheorem{lemma}{\bf {Lemma}}

\newtheorem{proposition}{{\bf Proposition}}

\newcommand{\st}{{\mathrm{s.t.}}}

\renewcommand{\algorithmicrequire}{\textbf{Input:}}

     \usepackage[compact]{titlesec}
    \titlespacing{\section}{0pt}{2ex}{1ex}
    \titlespacing{\subsection}{0pt}{1ex}{0ex}
    \titlespacing{\subsubsection}{0pt}{0.5ex}{0ex}

\usepackage{capt-of}
\usepackage{dblfloatfix}
\usepackage{varwidth}
\usepackage{tikz}

\usepackage{graphicx}
\usepackage{epstopdf}
\DeclareGraphicsExtensions{.eps,.pdf} 
\graphicspath{ {figures/} }

\usepackage{setspace}
\renewcommand{\baselinestretch}{1} 

\usepackage{amsfonts}
\usepackage{geometry}
\usepackage{array}
\usepackage{booktabs}
\usepackage{siunitx}

\usepackage{babel}
\usepackage{hyperref}

\newcommand{\cp}{\mathsf{cp}}
\newcommand{\cs}{\mathsf{cs}}
\newcommand{\abs}[1]{\left|#1\right|} 
\usepackage{hyperref}
\hypersetup{
 unicode=false,          
 pdftoolbar=true,        
 pdfmenubar=true,        
 pdffitwindow=false,     
 pdfstartview={FitH},    
 pdftitle={My title},    
 pdfauthor={Author},     
 pdfsubject={Subject},   
 pdfcreator={Creator},   
 pdfproducer={Producer}, 
 pdfkeywords={keyword1, key2, key3}, 
 pdfnewwindow=true,      
 colorlinks=true,       
 linkcolor=purple,          
 citecolor=purple,        
 filecolor=purple,      
 urlcolor=purple           
}

\makeatother